\newtheorem{theorem}{Theorem}
\newtheorem{definition}{Definition}
\newtheorem{lemma}{Lemma}
\newcommand{\lW}{ {}^{\mathcal{W}} }
\newcommand{\lK}{ {}^{\mathcal{K}} }
\newcommand{\lCK}{ {}^{\mathcal{C}}_{\mathcal{K}} }
\newcommand{\lCI}{ {}^{\mathcal{C}}_{\mathcal{I}} }
\newcommand{\vect}[1]{\bm{#1}}
\newcommand{\mat}[1]{\mathbf{#1}}
\begin{document}

\title{\LARGE \bf
GeVI-SLAM: Gravity-Enhanced Stereo Visual–Inertial SLAM for Underwater Robots}
\author{%
Yuan Shen$^{1,*}$, %
Yuze Hong$^{1,*}$, %
Guangyang Zeng$^{1,\dagger}$, %
Tengfei Zhang$^{1}$, %
Pui Yi Chui$^{2}$, %
Ziyang Hong$^{1}$, %
Junfeng Wu$^{1}$%
\thanks{$^{1}$School of Data Science, Chinese University of Hong Kong, Shenzhen, China. {\tt\footnotesize junfengwu@cuhk.edu.cn}}%
\thanks{$^{2}$School of Life Sciences, Chinese University of Hong Kong, China.}%
\thanks{$^{\dagger}$Corresponding author: {\tt\footnotesize zengguangyang@cuhk.edu.cn}}%
\thanks{$^{*}$These authors contributed equally to this work.}%
}

\maketitle
\thispagestyle{empty}
\pagestyle{empty}

\begin{abstract}
Accurate visual–inertial simultaneous localization and mapping (VI SLAM) for underwater robots remains a significant challenge due to frequent visual degeneracy and insufficient inertial measurement unit (IMU) motion excitation. 
In this paper, we present GeVI-SLAM, a gravity-enhanced stereo VI SLAM system designed to address these issues. By leveraging the stereo camera's direct depth estimation ability, we eliminate the need to estimate scale during IMU initialization, enabling stable operation even under low-acceleration dynamics.
With precise gravity initialization, we decouple the pitch and roll from the pose estimation and solve a 4 degrees of freedom (DOF) Perspective‑n‑Point (PnP) problem for pose tracking. This allows the use of a minimal 3-point solver, which significantly reduces computational time to reject outliers within a Random Sample Consensus framework.
We further propose a bias-eliminated 4-DOF PnP estimator with provable consistency, ensuring the relative pose converges to the true value as the feature number increases. 
To handle dynamic motion, we refine the full 6-DOF pose while jointly estimating the IMU covariance, enabling adaptive weighting of the gravity prior.
Extensive experiments on simulated and real-world data demonstrate that GeVI-SLAM achieves higher accuracy and greater stability compared to state-of-the-art methods.
\end{abstract}

\section{INTRODUCTION}
Underwater robotic perception challenges conventional algorithms\cite{mallios2017underwater,joshi2019experimental}. For visual–inertial (VI) systems, beyond well-documented issues like poor illumination and turbidity\cite{xu2025aqua}, underwater scenes introduce a set of compounding difficulties that are detrimental to perception performance. 

First, natural terrains and man-made structures 
often exhibit repetitive or weak textures, inflating feature ambiguity and yielding high outlier ratios in feature matching and tracking \cite{rahman2019svin2}; a frontend must therefore be substantially more robust than in typical terrestrial scenes. Second, scenes are often sparse and near-planar (e.g., flat seabeds and bridge piers), which results in geometric degeneracy and leads to unstable or incorrect state estimation~\cite{yao2024sg}. Third, low accelerations due to high underwater resistance undermine system initialization: VINS-Mono \cite{qin2018vins} suffers a scale--gravity--bias ambiguity under near-constant velocity \cite{hu2021visual}; the stereo--inertial measurement unit (IMU) mode in ORB-SLAM3\cite{campos2021orb} can recover scale but lacks excitation to accurately estimate IMU biases and gravity.
\begin{figure}[!t]
    \centering
    \includegraphics[width=\columnwidth]{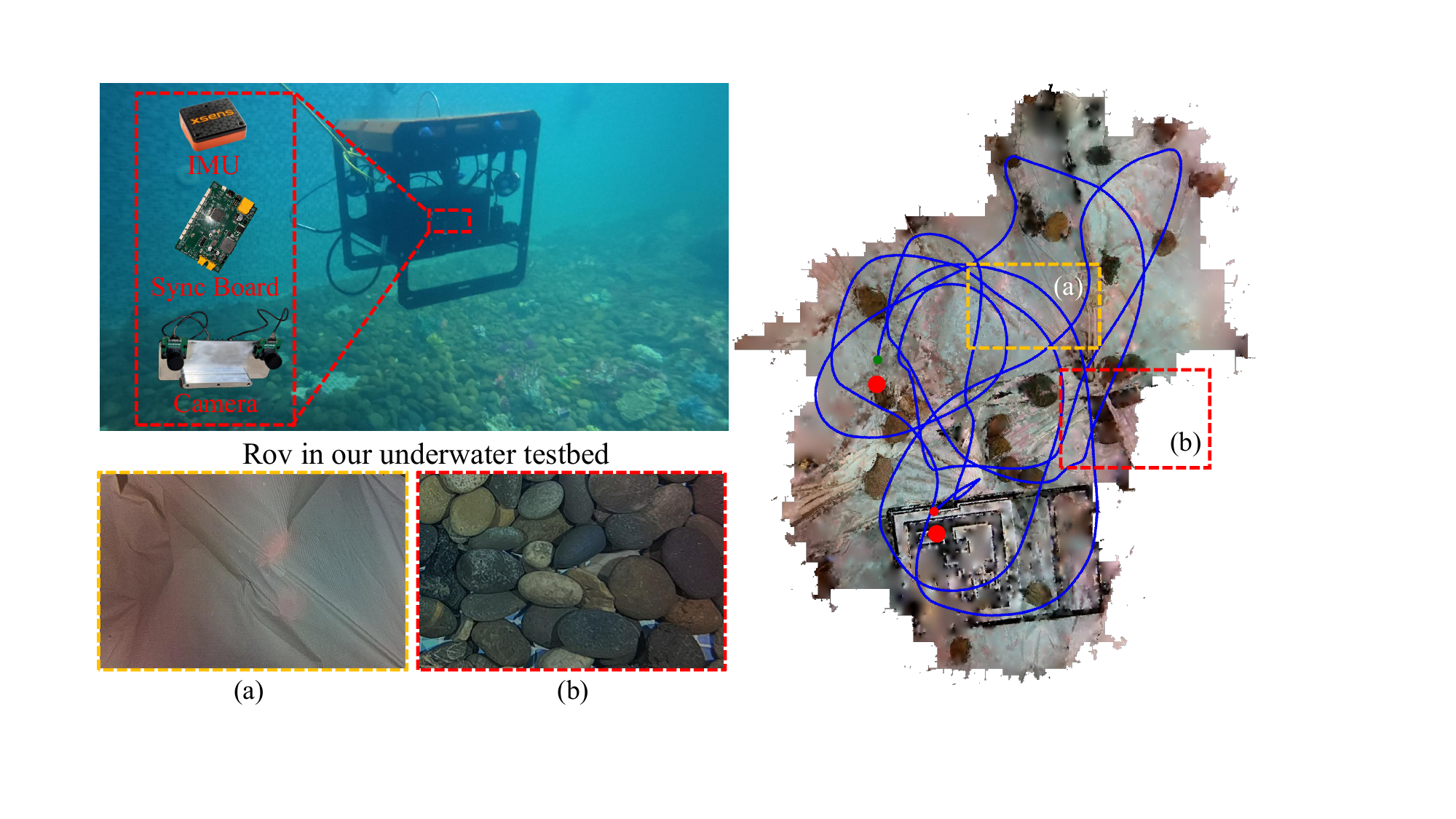}
    \caption{
        Estimated trajectory (blue line) using the proposed GeVI-SLAM system, visualized on a dense 3D reconstruction. It demonstrates our algorithm's robustness against common underwater challenges: (a) \textbf{feature-sparse, coplanar scenes} and (b) \textbf{highly repetitive patterns}.
    }
    \label{fig:cover_figure}
\end{figure}
\textbf{Contributions.} To tackle the above challenges, we propose a gravity-enhanced underwater VI SLAM system. We note that although low accelerations hinder initialization yet offer a stable and precise gravity prior once the gravity has been well initialized. Hence, we first design a stereo–IMU initialization with known scale, minimizing preintegration residuals to robustly estimate gravity and IMU biases. Then, our main focus is to leverage IMU gravity prior to decouple 6-degrees of freedom (DOF) pose and build a 4-DOF (yaw angle and 3D translation) visual frontend, which is resilient to high outlier rates and geometric degeneracy.
Furthermore, adaptive VI fusion gives rise to accurate full 6-DOF refinement.
In summary, our contributions are three-fold:
\begin{itemize}
    \item \textbf{Gravity-enhanced 4-DOF Perspective‑n‑Point (PnP):} By using gravity prior to reduce pose to 4-DOF, a minimal 3-point Random Sample Consensus (RANSAC) delivers faster and more robust outlier rejection (our required iteration number, relative to 5-point RANSAC, is on the order of the squared inlier ratio). In addition, our 4-DOF PnP estimator is proven to be consistent and performs stably in near-degenerate scenes.

    \item \textbf{Adaptive VI fusion:} We adopt a block coordinate descent (BCD) strategy to jointly refine 6-DOF pose and estimate gravity prior covariance, enabling adaptive weighting of the gravity constraint and effectively avoiding the drift of pitch and roll angles.    

    \item \textbf{Comprehensive experiment validation:} Simulations demonstrate Cramér-Rao Lower Bound (CRLB)-tight accuracy, robust minimal-set consensus under up to 30\% outliers, and drift-free pitch/roll estimation. Extensive real water pool experiments show consistently lower Absolute Trajectory Error (ATE) and Relative Pose Error (RPE) than ORB-SLAM3~\cite{campos2021orb}, VINS-Fusion~\cite{qin2018vins}, and SVIN2~\cite{rahman2019svin2}, and stable tracking where these baselines may fail. 
\end{itemize}

\textbf{Notations:} 
Scalars use italic lowercase ($a$), vectors bold lowercase ($\mathbf{a}$), and matrices bold uppercase ($\mathbf{A}$); unless stated otherwise, vectors are columns. Estimated quantities are marked with a hat $\hat{\mathbf{x}}$, while true values have a superscript circle $\mathbf{x}^o$. 
Reference frames are denoted by calligraphic letters: $\mathcal{W}$ (world), $\mathcal{C}$ (camera), $\mathcal{I}$ (IMU), and $\mathcal{K}$ (keyframe). 
A rigid body transformation from frame $\mathcal{A}$ to $\mathcal{B}$ is denoted by ${}^{\mathcal{B}}_{\mathcal{A}}\mathbf{T} \triangleq [{}^{\mathcal{B}}_{\mathcal{A}}\mathbf{R} | {}^{\mathcal{B}}_{\mathcal{A}}\mathbf{t}] \in \mathrm{SE}(3)$, where ${}^{\mathcal{B}}_{\mathcal{A}}\mathbf{R} \in \mathrm{SO}(3)$ is the rotation and ${}^{\mathcal{B}}_{\mathcal{A}}\mathbf{t} \in \mathbb{R}^3$ is the translation. A point ${}^{\mathcal{A}}\mathbf{p}$ is transformed to frame $\mathcal{B}$ via ${}^{\mathcal{B}}\mathbf{p} = {}^{\mathcal{B}}_{\mathcal{A}}\mathbf{R} {}^{\mathcal{A}}\mathbf{p} + {}^{\mathcal{B}}_{\mathcal{A}}\mathbf{t}$. 
The world frame $\mathcal{W}$ is defined such that its $z$-axis is aligned with the gravity vector.

\section{Related Work}
\subsection{Underwater Visual SLAM}
Underwater VI SLAM can be broadly categorized into two groups. Feature-based methods match sparse features and estimate motion via epipolar geometry or PnP; they are robust to viewpoint and illumination changes but can be computationally heavy and vulnerable in low-texture underwater scenes~\cite{hildebrandt2010imu,bellavia2017selective}. In contrast, direct methods minimize photometric error over image intensities and are more efficient and tolerant to weak texture; however, they require good initialization and are sensitive to illumination changes~\cite{yang2017feature,quattrini2016experimental}. Recent tightly coupled underwater VI SLAM systems mainly adopt feature-based frontends fused with IMU in joint optimization. For example, Wang \emph{et al.}~\cite{wang2023robust} fused ORB points, diagonal segments, IMU preintegration, and loop closure to achieve low Root Mean Square Error (RMSE) in real time. Meanwhile, unified variants take advantage of the two methods, e.g., UniVIO~\cite{miao2021univio} jointly optimized feature-based reprojection and direct photometric errors together with inertial terms in a unified stereo-IMU backend. 
Finally, some tightly coupled multi-sensor estimators combine cameras and IMU with more sensors, such as acoustic and depth sensors~\cite{rahman2019svin2,huang2023tightly}. 

\subsection{Leveraging Gravity Priors in SLAM}
Gravity provides a global prior fixing roll and pitch, leaving yaw as the only rotational DOF. Lupton \emph{et al.}~\cite{lupton2011visual} preintegrated IMU measurements to recover gravity and velocity, enabling constant-time VI SLAM. Li \emph{et al.}~\cite{li2019rapid} refined gravity on a unit sphere using vertical edges to decouple accelerometer bias and speed up initialization. Usenko \emph{et al.}~\cite{usenko2016direct} tightly coupled IMU with a stereo camera such that roll and pitch are observable and alignment is robust under fast motion and low texture. GaRLIO~\cite{noh2025garlio} fused radar Doppler with LiDAR and IMU for a velocity-aware gravity estimate that suppresses vertical drift. Kubelka \emph{et al.}~\cite{kubelka2022gravity} enforced gravity in LiDAR iterative closest point by fixing roll and pitch and solving only yaw and translation, reducing median normalized ATE by around 30\%.

The existing work mainly uses gravity as a cost term in iterative optimization. In this paper, we not only incorporate the gravity to achieve reduced-degree visual tracking, which allows a minimal-set closed-form solution, but also adaptively weight the gravity prior to account for time-varying motion acceleration.

\section{Gravity-Enhanced 4-DOF camera pose estimation}\label{section::4_dof_est}

\subsection{Euler Angles and Characterization of a 4-DOF Pose}
Unless otherwise stated, a camera frame refers to the left camera frame of a stereo camera. We aim to estimate the relative pose $\lCK\mathbf{T} \in \mathrm{SE}(3)$ from the camera keyframe $\mathcal{K}$ to the current camera frame $\mathcal{C}$. This transformation is composed of a rotation matrix $\lCK \mathbf{R} \in \mathrm{SO}(3)$ and a translation vector $\lCK \mathbf{t} \in \mathbb{R}^3$. The rotation $\lCK \mathbf{R}$ can be parameterized using three Euler angles: roll ($\phi$), pitch ($\theta$), and yaw ($\psi$). For our application, we decompose the rotation into two components: a rotation representing yaw, $\mathbf{R}_\psi$, and a rotation representing the combined roll and pitch, $\mathbf{R}_{\theta\phi}$. The full rotation is expressed as $\lCK \mathbf{R} = \mathbf{R}_\psi \mathbf{R}_{\theta\phi}$,
where $\mathbf{R}_\psi=\left[\begin{smallmatrix}
    \cos{\psi} & \sin{\psi} & 0\\
    -\sin{\psi} & \cos{\psi}  & 0\\
    0          & 0           & 1
\end{smallmatrix}\right]$
and $\mathbf{R}_{\theta\phi}=\left[\begin{smallmatrix}
   \cos{\theta}  & -\sin{\theta}\sin{\phi} & \sin{\theta}\cos{\phi}\\
    0 & \cos{\phi}  & \sin{\phi}\\
    -\sin{\theta} & -\cos{\theta}\sin{\phi} & \cos{\theta}\cos{\phi}
\end{smallmatrix}\right]$.

This factorization is particularly useful because in a world frame $\mathcal{W}$, the $xy$-plane is aligned with the ground, and an IMU can estimate roll and pitch relative to gravity~\cite{mahony2008nonlinear}. With roll and pitch known, only yaw remains, motivating a 4-DOF pose consisting of the yaw angle $\psi$ and $\lCK \mathbf{t}$.

\subsection{Acquisition of Drift-Free Roll and Pitch }\label{section::imu_gravity_est}
To provide a reliable roll and pitch for our 4-DOF PnP solver, we fuse measurements from the IMU's gyroscope and accelerometer. Our approach is inspired by the complementary filtering\cite{mahony2008nonlinear}, which optimally combines the strengths of both sensors. Gyroscopes yield accurate high-frequency attitude updates over short intervals but drift over time. Accelerometers provide a drift-free, low-frequency gravity reference in quasi-static motion (stationary or near-constant velocity) but are sensitive to linear acceleration. By integrating gyroscope measurements for dynamic tracking and using accelerometer data within a probabilistic optimization framework (detailed in Section~\ref{jointly_est_refine}) for drift correction, our system can maintain accurate pitch and roll angles.

\subsection{Problem Formulation}\label{subsec:4dof_problem}


As illustrated in Fig.~\ref{fig:epipolar_model}, after feature tracking and outlier removal, we obtain 2D point correspondences $\{{{\bf q}_i, {\bf z}_i, {\bf y}_i,}\}_{i \in \mathcal I}$, where $\mathcal I$ is the inlier set, ${\bf q}_i$'s denote the points in the left image of the current frame, and ${\bf z}_i$'s and ${\bf y}_i$'s are the points in the left and right images of the keyframe, respectively. Unless otherwise stated, image 2D points are expressed using the normalized image coordinates. 
We assume the feature tracking errors are independent and identically distributed (i.i.d.) zero-mean Gaussian noises, i.e., $\mathbf{z}_i \sim \mathcal{N}(\mathbf{z}_i^o, \sigma^2 \mathbf{I}_2)$ and $\mathbf{y}_i \sim \mathcal{N}(\mathbf{y}_i^o, \sigma^2 \mathbf{I}_2)$. The zero-mean and isotropic covariance is reasonable because, after outlier rejection, the errors from intensity-based trackers tend to be unbiased and inherit the spatially uniform nature of physical sensor noise. This model is not only a good approximation for pinhole cameras~\cite{EIVPNPRAL2025} but also practical, as its variance $\sigma^2$ can be consistently estimated from data~\cite{zeng2024consistent}.


Leveraging the 4-DOF parameterization introduced previously, our goal is to estimate the state vector $\boldsymbol{\chi} = [\psi, ~\lCK \mathbf{t}^\top]^\top$. From this state, the full relative rotation is constructed as $\lCK \mathbf{R}(\psi) = \mathbf{R}_\psi \mathbf{R}_{\theta\phi}$, where \(\mathbf{R}_{\theta\phi}\) is the roll and pitch estimate obtained from IMU information. Here, we utilize point-to-epipolar-line distances as costs and construct a maximum likelihood (ML) estimation problem. For the $i$-th correspondence, we formulate two geometric costs. The first relates to the current frame observation $\mathbf{q}_i$ and the keyframe's left image observation $\mathbf{z}_i$. 
The epipolar line $\mathbf{l}_{i, \mathcal{K}_L}$ in the left keyframe image is generated by back-projecting $\mathbf{q}_i$ using the pose ${}^{\mathcal{C}}_{\mathcal{K}}\mathbf{T} = [{}^{\mathcal{C}}_{\mathcal{K}}\mathbf{R}(\psi) ~| ~{}^{\mathcal{C}}_{\mathcal{K}}\mathbf{t}]$: $\mathbf{l}_{i, \mathcal{K}_L} = \left([{}^{\mathcal{C}}_{\mathcal{K}}\mathbf{t}]_{\times} {}^{\mathcal{C}}_{\mathcal{K}}\mathbf{R}(\psi) \right)^\top \mathbf{q}_i^h$,
where $\mathbf{q}_i^h$ is the homogeneous coordinates of $\mathbf{q}_i$, and $[\cdot]_\times$ is the operator that maps a vector in $\mathbb{R}^3$ to its corresponding $3 \times 3$ skew-symmetric matrix, such that for any two vectors $\mathbf{a}, \mathbf{b} \in \mathbb{R}^3$, $[\mathbf{a}]_\times \mathbf{b} = \mathbf{a} \times \mathbf{b}$. Then, the distance of point ${\bf z}_i$ to epipolar line $\mathbf{l}_{i, \mathcal{K}_L}$ is $d_{i, \mathcal{K}_L}=\frac{\mathbf{l}_{i, \mathcal{K}_L}^\top \mathbf{z}_i^h}{\| \mathbf{E} \mathbf{l}_{i, \mathcal{K}_L} \|}$,
where $\mathbf{E} = [{\bf e}_1,~{\bf e}_2]^\top$, and ${\bf e}_i$ denotes the unit vector in $\mathbb R^3$ whose $i$-th element is 1. The second cost, defined in the keyframe’s right image, uses the pose ${}^{\mathcal{C}}_{\mathcal{K}_R}\mathbf{T}$ obtained via the stereo extrinsic:
${}^{\mathcal{C}}_{\mathcal{K}_R}\mathbf{T} = {}^{\mathcal{C}}_{\mathcal{K}}\mathbf{T}\,({}^{\mathcal{K}_R}_{\mathcal{K}}\mathbf{T})^{-1} = [\,{}^{\mathcal{C}}_{\mathcal{K}_R}\mathbf{R}\;|\;{}^{\mathcal{C}}_{\mathcal{K}_R}\mathbf{t}\,]$.
The corresponding epipolar line is
$\mathbf{l}_{i,\mathcal{K}_R} = \left([{}^{\mathcal{C}}_{\mathcal{K}_R}\mathbf{t}]_{\times}\,{}^{\mathcal{C}}_{\mathcal{K}_R}\mathbf{R}\right)^\top \mathbf{q}_i^h$,
and the distance is
$d_{i,\mathcal{K}_R}=\frac{\mathbf{l}_{i,\mathcal{K}_R}^\top \mathbf{y}_i^h}{\| \mathbf{E}\,\mathbf{l}_{i,\mathcal{K}_R} \|}$. 


Under the aforementioned noise assumptions on $\mathbf{z}_i$ and $\mathbf{y}_i$, one can verify that the resulting point-to-epipolar-line distance errors follow the Gaussian distribution $\mathcal N(0,\sigma^2)$. Hence, the ML estimate for the 4-DOF relative pose $\boldsymbol{\chi}$ is obtained by solving the following non-linear least-squares problem over all inlier correspondences $\mathcal{I}$:
\begin{equation} \label{eq:4dof_objective}
    \underset{\boldsymbol{\chi} \in \mathbb R^4}{\min} \sum_{i \in \mathcal{I}} \left( d_{i, \mathcal{K}_L}^2 + d_{i, \mathcal{K}_R}^2 \right).
\end{equation}
The optimal solution to the ML problem~\eqref{eq:4dof_objective} is called the ML estimate, denoted as $\hat {\bm \chi}^{\rm ML}$. Solving this non-linear optimization is challenging as the objective function is non-convex, making iterative solvers highly dependent on a good initial guess. To circumvent this issue, we propose linearizing the model to derive an efficient and consistent initial estimator.


\subsection{Consistent 4-DOF Estimator}\label{subsec:consistent_init}
To initialize the non-linear optimization~\eqref{eq:4dof_objective}, we develop a consistent estimator for the 4-DOF relative pose. The process involves three main steps: uncertainty-aware 3D point triangulation, construction of a linear model, and a closed-form bias-eliminated (BE) estimator.
\begin{figure}[h]
    \centering
    \includegraphics[width=0.85\linewidth]{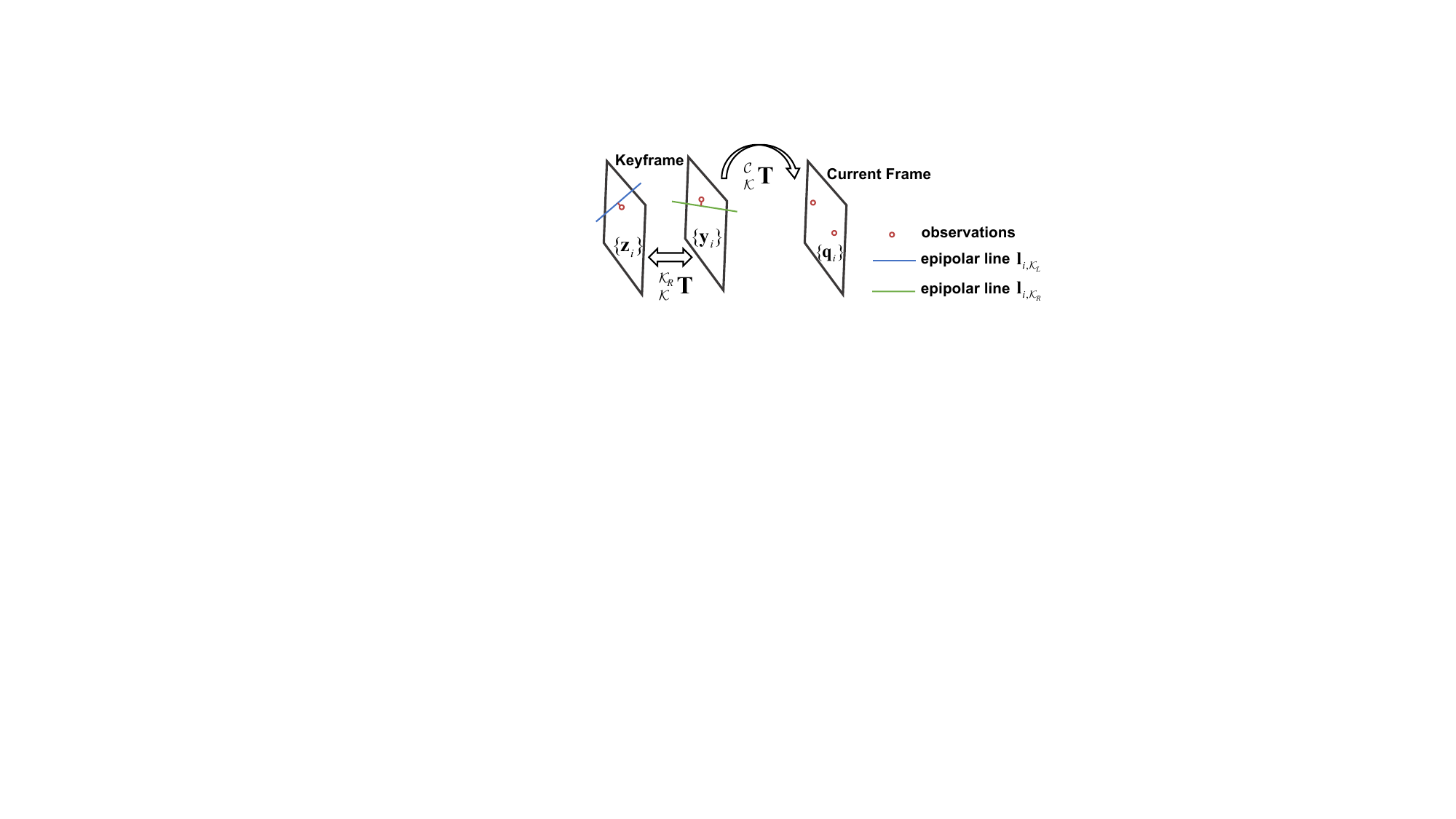}
    \caption{4-DOF pose estimation based on point-to-epipolar-line distances.}
    \label{fig:epipolar_model}
\end{figure}
\subsubsection{Uncertainty-Aware 3D Point Triangulation}
First, we use the method in~\cite{zeng2024consistent} to obtain a consistent estimate $\hat{\sigma}^2$ of 2D noise variance $\sigma^2$. It involves solving a generalized eigenvalue problem. For the details, one can refer to Appendix A of~\cite{zeng2024consistent}.  
Then, following equations (1) and (2) in~\cite{EIVPNPRAL2025}, we can triangulate the 3D points $\lK \mathbf{p}_i$'s from stereo observations in the previous keyframe along with their uncertainties $\boldsymbol{\Sigma}_{\mathbf{p}_i}$'s. As a result, we establish 3D-2D point correspondences $\{\lK \mathbf{p}_i,{\bf q}_i\}_{i \in \mathcal I}$, where the 3D points $\lW \mathbf{p}_i$'s are noisy with known covariance $\boldsymbol{\Sigma}_{\mathbf{p}_i}$, while their corresponding 2D matches $\mathbf{q}_i$'s in the current frame are noise-free.

\subsubsection{Model Linearization}
Our goal is to linearize the camera projection model to enable an efficient closed-form solution. We begin with the ideal projection of a true 3D point ${^{\mathcal{K}}}\mathbf{p}_i^o$, expressed in the left keyframe, into the current frame's normalized image plane:
\begin{equation}\label{eq:ideal_proj_rephrased}
    \mathbf{q}_i = h( {^{\mathcal{K}}}\mathbf{p}_i^o; \boldsymbol{\chi}) \triangleq \frac{\mathbf{E}\left(\lCK \mathbf{R}(\psi) {^{\mathcal{K}}}\mathbf{p}_i^o + \lCK\mathbf{t}\right)}{{\bf e}_3^\top(\lCK \mathbf{R}(\psi) {^{\mathcal{K}}}\mathbf{p}_i^o + \lCK\mathbf{t})},
\end{equation}
where $h$ is the pinhole camera projection function. Recall that the 2D point $\mathbf{q}_i$ is noise-free, and the triangulated 3D point ${^{\mathcal{K}}}\mathbf{p}_i$ is noisy, modeled as ${^{\mathcal{K}}}\mathbf{p}_i = {^{\mathcal{K}}}\mathbf{p}_i^o + \boldsymbol{\epsilon}_{\mathbf{p}_i}$, with $\boldsymbol{\epsilon}_{\mathbf{p}_i} \sim \mathcal{N}(\mathbf{0}, \boldsymbol{\Sigma}_{\mathbf{p}_i})$.

To linearize the model, we multiply both sides of equation~\eqref{eq:ideal_proj_rephrased} by ${\bf e}_3^\top(\lCK \mathbf{R}(\psi) {^{\mathcal{K}}}\mathbf{p}_i^o + \lCK\mathbf{t})$: 
\begin{equation}\label{eq:linear_model_intermediate}
    {\bf e}_3^\top(\lCK \mathbf{R}(\psi) {^{\mathcal{K}}}\mathbf{p}_i^o + \lCK\mathbf{t}) \mathbf{q}_i = \mathbf{E}(\lCK \mathbf{R}(\psi) {^{\mathcal{K}}}\mathbf{p}_i^o + \lCK\mathbf{t}).
\end{equation}
We now define the state vector to be estimated as $\mathbf{x} \triangleq [\cos(\psi), \sin(\psi), \lCK\mathbf{t}^\top]^\top$. Let $\boldsymbol{\rho}_i \triangleq \mathbf{R}_{\theta\phi} {^{\mathcal{K}}}\mathbf{p}_i$ be the pre-rotated point and $\boldsymbol{\epsilon}_{{\bm \rho}_i} \triangleq \mathbf{R}_{\theta\phi}\boldsymbol{\epsilon}_{\mathbf{p}_i}$ be its propagated noise. By substituting ${^{\mathcal{K}}}\mathbf{p}_i^o = {^{\mathcal{K}}}\mathbf{p}_i - \boldsymbol{\epsilon}_{\mathbf{p}_i}$ and rearranging equation~\eqref{eq:linear_model_intermediate}, we obtain a linear equation with respect to $\mathbf{x}$:
\begin{equation} \label{single_linear_model}
    \mathbf{A}_i \mathbf{x} = \mathbf{b}_i + \boldsymbol{\epsilon}_i,
\end{equation}
where $\mathbf{b}_i = {\bm \rho}_i(3) {\bm q}_i$,
$\mathbf{A}_i=\left[\begin{smallmatrix}
   {\bm \rho}_i(1) & -{\bm \rho}_i(2) & 1 & 0 & -{\bm q}_i(1) \\
        {\bm \rho}_i(2) &{\bm \rho}_i(1) & 0 & 1 & -{\bm q}_i(2)
\end{smallmatrix}\right]$, and $\boldsymbol{\epsilon}_i=\left[\begin{smallmatrix}
  {\bm \epsilon}_{{\bm \rho}_i}(1) \cos(\psi) - {\bm \epsilon}_{{\bm \rho}_i}(2) \sin(\psi) - {\bm \epsilon}_{{\bm \rho}_i}(3){\bm q}_i(1) \\
        {\bm \epsilon}_{{\bm \rho}_i}(2) \cos(\psi) + {\bm \epsilon}_{{\bm \rho}_i}(1) \sin(\psi) - {\bm \epsilon}_{{\bm \rho}_i}(3){\bm q}_i(2)
\end{smallmatrix}\right]$.
Stacking~\eqref{single_linear_model} for all point correspondences yields the full linear system
\begin{equation} \label{full_linear_model}
    \mathbf{A}\mathbf{x} = \mathbf{b} + \boldsymbol{\epsilon}.
\end{equation}

\subsubsection{Bias-Eliminated Consistent Estimator}
Based on~\eqref{full_linear_model}, an ordinary least-squares (LS) solution can be obtained as
\begin{equation} \label{biased_LS}
    \hat{\mathbf{x}}^\mathrm{LS} = (\mathbf{A}^\top\mathbf{A})^{-1}\mathbf{A}^\top\mathbf{b}.
\end{equation}
However, the regressor term $\bf A$ contains noise and is correlated with the noise in the regressand term $\mathbf{b}$. As a result, this ordinary LS solution is generally statistically biased~\cite{zeng2024consistent}. To obtain a consistent estimate, we employ a BE approach that corrects for this correlation. The BE estimator is given by:
\begin{equation} \label{eqn:consistent_est}
    \hat{\mathbf{x}}^\mathrm{BE} = \left(\frac{\mathbf{A}^\top\mathbf{A}}{n} - \mathbf{G}_1\right)^{-1} \left(\frac{\mathbf{A}^\top\mathbf{b}}{n} - \mathbf{G}_2\right),
\end{equation}
where the bias correction terms $\mathbf{G}_1$ and $\mathbf{G}_2$ are derived from the known covariances ${\Sigma}_{\boldsymbol{\rho}_i} = \mathbf{R}_{\theta\phi}\boldsymbol{\Sigma}_{\mathbf{p}_i}\mathbf{R}_{\theta\phi}^{\top}$. Specifically, let $\boldsymbol{\Delta}\mathbf{A} = \mathbf{A} - \mathbf{A}^{o}$ and $\boldsymbol{\Delta}\mathbf{b}= \mathbf{b}- \mathbf{b}^{o}$. The first correction term $\mat{G}_1$ is given by
\begin{align*}
    \mat{G}_1 &= \mathbb{E}\left[\frac{1}{n}\boldsymbol{\Delta}\mat{A}^\top\boldsymbol{\Delta}\mat{A}\right] \\
    &= \left( \bar{\boldsymbol{\Sigma}}_{\vect{\rho}}(1,1) + \bar{\boldsymbol{\Sigma}}_{\vect{\rho}}(2,2) \right) \left( \mat{e}_1\mat{e}_1^\top + \mat{e}_2\mat{e}_2^\top \right),
\end{align*}
where $\bar{\boldsymbol{\Sigma}}_{\vect{\rho}} = \frac{1}{n}\sum_{i=1}^{n} \boldsymbol{\Sigma}_{\vect{\rho}_i}$ is the average covariance matrix. The second correction term, $\mat{G}_2 = \mathbb{E}[\frac{1}{n}\boldsymbol{\Delta}\mat{A}^\top\boldsymbol{\Delta}\mat{b}]$, is a five-dimensional vector whose last three components are zero. Its first two components are given by  $\frac{1}{n}\sum_{i=1}^{n} (\boldsymbol{\Sigma}_{\vect{\rho}_i}(1,3)\mat{I}_2 + \boldsymbol{\Sigma}_{\vect{\rho}_i}(2,3)\mat{J})[\vect{q}_i(1), \vect{q}_i(2)]^\top$, where $\mat{J}$ is the skew-symmetric matrix $\left[\begin{smallmatrix} 0 & 1 \\ -1 & 0 \end{smallmatrix}\right]$.


\begin{definition}[Big $O_p$]
    An estimator $\hat{\bm \gamma}$ is called a $\sqrt{n}$-consistent estimator of ${\bm \gamma}^o$ if $\hat{\bm \gamma}-{\bm \gamma}^o=O_p(1/\sqrt{n})$. That is, for any $\varepsilon >0$, there exist finite constants $M$ and $N$ such that for all $n>N$, $\mathbb{P} (\|\sqrt{n}(\hat {\bm\gamma} -{\bm\gamma}^o)\|>M )<\varepsilon$.
\end{definition}
        
\begin{theorem}\label{theorem_BE}
    The BE estimator $\hat{\bf x}^\mathrm{BE}$ is a $\sqrt{n}$-consistent estimator for the true state vector ${\bf x}^o$.
\end{theorem}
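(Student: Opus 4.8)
The plan is to prove consistency by the standard errors‑in‑variables argument: the correction terms $\mathbf{G}_1,\mathbf{G}_2$ are constructed precisely to cancel the non‑vanishing expectations in the noisy normal equations, so that the matrix that is inverted and the vector it multiplies in~\eqref{eqn:consistent_est} coincide with their noise‑free counterparts up to $O_p(1/\sqrt n)$, and the noise‑free system is solved exactly by $\mathbf{x}^o$. Concretely, the derivation of~\eqref{single_linear_model} from the exact projection~\eqref{eq:linear_model_intermediate} shows that at the true values the linear model holds \emph{without residual}, i.e.\ $\mathbf{A}^o\mathbf{x}^o=\mathbf{b}^o$, where $\mathbf{A}^o,\mathbf{b}^o$ are built from the true pre‑rotated points $\boldsymbol{\rho}_i^o$ and the noise‑free $\mathbf{q}_i$.

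First I would set up the error decomposition. Writing $\mathbf{A}=\mathbf{A}^o+\boldsymbol{\Delta}\mathbf{A}$ and $\mathbf{b}=\mathbf{b}^o+\boldsymbol{\Delta}\mathbf{b}$, the explicit forms of $\mathbf{A}_i,\mathbf{b}_i$ together with $\boldsymbol{\rho}_i=\boldsymbol{\rho}_i^o+\boldsymbol{\epsilon}_{\boldsymbol{\rho}_i}$ (and $\mathbf{q}_i$ exact) show that every block $\boldsymbol{\Delta}\mathbf{A}_i$ and $\boldsymbol{\Delta}\mathbf{b}_i=\boldsymbol{\epsilon}_{\boldsymbol{\rho}_i}(3)\,\mathbf{q}_i$ is \emph{linear} in the zero‑mean noise $\boldsymbol{\epsilon}_{\boldsymbol{\rho}_i}=\mathbf{R}_{\theta\phi}\boldsymbol{\epsilon}_{\mathbf{p}_i}$, so $\mathbb{E}[\boldsymbol{\Delta}\mathbf{A}_i]=\mathbf 0$ and $\mathbb{E}[\boldsymbol{\Delta}\mathbf{b}_i]=\mathbf 0$. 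Assuming the $\boldsymbol{\epsilon}_{\mathbf{p}_i}$ are independent across $i$ with uniformly bounded moments and that the true regressors and covariances $\boldsymbol{\Sigma}_{\mathbf{p}_i}$ are uniformly bounded, I would expand
\begin{equation*}
\frac{\mathbf{A}^\top\mathbf{A}}{n}=\frac{\mathbf{A}^{o\top}\mathbf{A}^o}{n}+\frac{\mathbf{A}^{o\top}\boldsymbol{\Delta}\mathbf{A}+\boldsymbol{\Delta}\mathbf{A}^\top\mathbf{A}^o}{n}+\frac{\boldsymbol{\Delta}\mathbf{A}^\top\boldsymbol{\Delta}\mathbf{A}}{n},
\end{equation*}
and apply the weak law / central limit theorem for sums of independent terms: the cross term has zero mean, hence is $O_p(1/\sqrt n)$, while $\tfrac1n\boldsymbol{\Delta}\mathbf{A}^\top\boldsymbol{\Delta}\mathbf{A}$ equals its mean $\mathbf{G}_1$ — which is exactly how $\mathbf{G}_1$ was defined — up to an $O_p(1/\sqrt n)$ centered fluctuation. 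The same computation applied to $\tfrac1n\mathbf{A}^\top\mathbf{b}$ gives $\tfrac1n\mathbf{A}^\top\mathbf{b}=\tfrac1n\mathbf{A}^{o\top}\mathbf{b}^o+\mathbf{G}_2+O_p(1/\sqrt n)$.

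Subtracting $\mathbf{G}_1$ and $\mathbf{G}_2$ cancels exactly these surviving biases:
\begin{equation*}
\frac{\mathbf{A}^\top\mathbf{A}}{n}-\mathbf{G}_1=\frac{\mathbf{A}^{o\top}\mathbf{A}^o}{n}+O_p(1/\sqrt{n}),\qquad \frac{\mathbf{A}^\top\mathbf{b}}{n}-\mathbf{G}_2=\frac{\mathbf{A}^{o\top}\mathbf{b}^o}{n}+O_p(1/\sqrt{n}).
\end{equation*}
Using $\mathbf{A}^o\mathbf{x}^o=\mathbf{b}^o$ we write $\tfrac1n\mathbf{A}^{o\top}\mathbf{b}^o=\big(\tfrac1n\mathbf{A}^{o\top}\mathbf{A}^o\big)\mathbf{x}^o$. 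Under the mild, standard non‑degeneracy assumption that $\tfrac1n\mathbf{A}^{o\top}\mathbf{A}^o\succeq c\mathbf{I}$ for some $c>0$ and all large $n$ — so that, with high probability, the matrix inverted in~\eqref{eqn:consistent_est} is invertible with uniformly bounded inverse — the first‑order perturbation identity $(\mathbf{M}+O_p(1/\sqrt n))^{-1}=\mathbf{M}^{-1}+O_p(1/\sqrt n)$ yields
\begin{equation*}
\hat{\mathbf{x}}^{\mathrm{BE}}=\left(\frac{\mathbf{A}^{o\top}\mathbf{A}^o}{n}+O_p(1/\sqrt{n})\right)^{-1}\left(\frac{\mathbf{A}^{o\top}\mathbf{A}^o}{n}\mathbf{x}^o+O_p(1/\sqrt{n})\right)=\mathbf{x}^o+O_p(1/\sqrt{n}),
\end{equation*}
which is precisely $\sqrt n$‑consistency.

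I expect the main obstacle to be making the $O_p(1/\sqrt n)$ bookkeeping rigorous and \emph{uniform} in $n$: one must control all three error blocks simultaneously, which requires independence of the triangulation noises, uniformly bounded fourth moments, uniformly bounded true points $\boldsymbol{\rho}_i^o$ and image points $\mathbf{q}_i$, and — crucially — $\lambda_{\min}(\tfrac1n\mathbf{A}^{o\top}\mathbf{A}^o)$ bounded away from zero, i.e.\ the scene geometry is not asymptotically degenerate. A secondary technical point is that $\mathbf{G}_1,\mathbf{G}_2$ are in practice assembled from the triangulated covariances and the estimated noise variance $\hat\sigma^2$ rather than from the true $\boldsymbol{\Sigma}_{\mathbf{p}_i}$ and $\sigma^2$; since $\hat\sigma^2$ is itself $\sqrt n$‑consistent (Appendix~A of~\cite{zeng2024consistent}) and the covariance propagation is smooth, this only replaces $\mathbf{G}_1,\mathbf{G}_2$ by quantities differing by $O_p(1/\sqrt n)$ and does not change the conclusion, but it should be stated explicitly.
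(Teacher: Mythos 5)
Your proposal is correct and follows essentially the same route as the paper's proof: decompose $\mathbf{A}$ and $\mathbf{b}$ into true parts plus noise, use the law of large numbers for independent zero-mean terms to show the first-order (cross) terms are $O_p(1/\sqrt{n})$, observe that $\mathbf{G}_1$ and $\mathbf{G}_2$ exactly center the second-order terms, and conclude via the perturbation identity for the inverse together with $\mathbf{A}^o\mathbf{x}^o=\mathbf{b}^o$. Your version is merely more explicit about the regularity conditions (independence, bounded moments, $\lambda_{\min}(\tfrac1n\mathbf{A}^{o\top}\mathbf{A}^o)$ bounded away from zero, and the plug-in of $\hat\sigma^2$), which the paper leaves implicit.
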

\begin{proof}
     The proof is based mainly on the following lemma: \begin{lemma}{(\cite{EIVPNPRAL2025}, Lemma 1):}\label{Lemma_1}
            Let $\{X_i\}$ be a sequence of independent random variables with $\mathbb{E}[X_i]=0$ and bounded $\mathbb{E}[X_i^2]$. Then, $\sum_{i=1}^n X_i/n = O_p(1/\sqrt{n})$.
		\end{lemma}
        
    In the noise-free case, $(\mathbf{A}^{o\top} \mathbf{A}^o)^{-1} \mathbf{A}^{o\top} \mathbf{b}^o$, or equivalently $(\frac{\mathbf{A}^{o\top} \mathbf{A}^o}{n})^{-1} \frac{\mathbf{A}^{o\top} \mathbf{b}^o}{n}$ results in the ground truth ${\mathbf{x}}^o$. The idea of the proof is to show that $\frac{\mathbf{A}^{\top} \mathbf{A}}{n}-\mathbf{G}_1$ and $\frac{\mathbf{A}^{\top} \mathbf{b}}{n} - \mathbf{G}_2$ converge to $\frac{\mathbf{A}^{o\top} \mathbf{A}^o}{n}$ and $\frac{\mathbf{A}^{o\top} \mathbf{b}^o}{n}$, respectively.  
    First, $\Delta \mathbf{A}^\top \Delta \mathbf{A}$ contains both the first and second order terms of $\boldsymbol{\epsilon}_{{\bm \rho}_i}$. For the second-order term $\boldsymbol{\epsilon}_{{\bm \rho}_i} \boldsymbol{\epsilon}_{{\bm \rho}_i}^\top$, we can subtract it by $ \boldsymbol{\Sigma}_{{\bm \rho}_i} $ to achieve the zero mean. It can be verified that $-\mathbf{G}_1$ actually performs this procedure. Hence, according to Lemma~\ref{Lemma_1}, it holds that 
    \begin{equation} \label{eqn:Op2}
             \frac{\mathbf{A}^{\top} \mathbf{A}}{n}-\frac{\mathbf{A}^{o\top} \mathbf{A}^{o}}{n}  =\frac{\Delta \mathbf{A}^{\top} \Delta \mathbf{A}}{n}+O_p(\frac{1}{\sqrt{n}}) \\
         = \mathbf{G}_1 + O_p(\frac{1}{\sqrt{n}}).
    \end{equation} 
    Second, similarly, for ${\bf A}^\top{\bf b}/n$, based on Lemma~\ref{Lemma_1}, we can obtain  
    \begin{equation} \label{eqn:Op1}
        \frac{\mathbf{A}^{\top} \mathbf{b}}{n}-\frac{\mathbf{A}^{o\top} \mathbf{b}^{o}}{n}= \mathbf{G}_2 + O_p(\frac{1}{\sqrt{n}}).
    \end{equation}
    Finally, by combining~\eqref{eqn:Op1} and~\eqref{eqn:Op2}, we obtain
    \begin{align*}
        \hat {\mathbf{x}}^{\rm BE} & = \left(\frac{\mathbf{A}^{o\top} \mathbf{A}^o}{n}+O_p(\frac{1}{\sqrt{n}})\right)^{-1} \left(\frac{\mathbf{A}^{o\top} \mathbf{b}^{o}}{n} +O_p(\frac{1}{\sqrt{n}})\right) \\
        & = \left(\frac{\mathbf{A}^{o\top} \mathbf{A}^o}{n}\right)^{-1} \frac{\mathbf{A}^{o\top} \mathbf{b}^{o}}{n} +O_p(\frac{1}{\sqrt{n}}) = {\bf x}^o + O_p(\frac{1}{\sqrt{n}}),
    \end{align*}
    which completes the proof.
\end{proof}

\subsection{Epipolar-Based Gauss-Newton Refinement}\label{subsec:GN_refine}
The previous stage provides a robust and consistent pose estimate. Using the consistent estimate as the initial value, we further refine the pose by implementing the  Gauss-Newton (GN) algorithm over the epipolar ML problem~\eqref{eq:4dof_objective} using the inlier set $\mathcal I$. Thanks to the $\sqrt{n}$-consistent initial estimator (Theorem~\ref{theorem_BE}), only a one-step GN iteration is sufficient to achieve the asymptotic efficiency, which is formally stated in the following lemma.
Denote the one-step GN iteration of $\hat {\bm \chi}^{\rm BE}$ by $\hat {\bm \chi}^{\rm GN}$.
\begin{lemma}[{\cite[Theorem 2]{mu2017globally}}]\label{GN_Consistancy}
    We have\begin{equation*}
		\hat {\bm \chi}^{\rm GN}-\hat {\bm \chi}^{\rm ML}=o_p(\frac{1}{\sqrt{n}}).
	\end{equation*}
\end{lemma}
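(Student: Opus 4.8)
The plan is to verify that the one-step Gauss--Newton update meets the hypotheses of \cite[Theorem 2]{mu2017globally}, which states precisely that a single Gauss--Newton iteration launched from a $\sqrt{n}$-consistent initializer is asymptotically equivalent, up to $o_p(1/\sqrt{n})$, to the least-squares minimizer. To set up, I would write the residual vector $\mathbf{r}(\bm{\chi})$ stacking $\{d_{i,\mathcal{K}_L}(\bm{\chi}), d_{i,\mathcal{K}_R}(\bm{\chi})\}_{i\in\mathcal{I}}$ and let $\mathbf{J}(\bm{\chi}) = \partial\mathbf{r}/\partial\bm{\chi}$, so that the objective in \eqref{eq:4dof_objective} is $f(\bm{\chi}) = \mathbf{r}(\bm{\chi})^\top\mathbf{r}(\bm{\chi})$, the update is $\hat{\bm{\chi}}^{\rm GN} = \hat{\bm{\chi}}^{\rm BE} - (\mathbf{J}^\top\mathbf{J})^{-1}\mathbf{J}^\top\mathbf{r}$ evaluated at $\hat{\bm{\chi}}^{\rm BE}$, and $\hat{\bm{\chi}}^{\rm ML}$ solves the stationarity equation $\mathbf{J}(\hat{\bm{\chi}}^{\rm ML})^\top\mathbf{r}(\hat{\bm{\chi}}^{\rm ML}) = \mathbf{0}$.

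First I would transfer the consistency of the initializer to the $\bm{\chi}$-parameterization: Theorem~\ref{theorem_BE} gives $\hat{\mathbf{x}}^{\rm BE} - \mathbf{x}^o = O_p(1/\sqrt{n})$ for $\mathbf{x} = [\cos\psi, \sin\psi, \lCK\mathbf{t}^\top]^\top$, and since $\hat{\bm{\chi}}^{\rm BE}$ is recovered from $\hat{\mathbf{x}}^{\rm BE}$ by a map that is smooth near $\mathbf{x}^o$ (renormalizing the $(\cos\psi,\sin\psi)$ block and applying $\mathrm{atan2}$), we get $\hat{\bm{\chi}}^{\rm BE} - \bm{\chi}^o = O_p(1/\sqrt{n})$. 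Next I would record the standard regularity facts for the epipolar ML problem under the noise model of Section~\ref{subsec:4dof_problem}: (i) $\mathbf{r}(\bm{\chi})$ is twice continuously differentiable in a neighborhood of $\bm{\chi}^o$, since the epipolar-line denominators $\|\mathbf{E}\mathbf{l}_{i,\cdot}\|$ stay bounded away from zero there (nonzero baseline, finite-depth points); (ii) the normalized Gram matrix $\frac{1}{n}\mathbf{J}(\bm{\chi})^\top\mathbf{J}(\bm{\chi})$ converges in probability, uniformly on that neighborhood, to a deterministic $\mathbf{M}(\bm{\chi})$ with $\mathbf{M}(\bm{\chi}^o)\succ 0$ (the feature-configuration non-degeneracy condition); and (iii) $\frac{1}{\sqrt{n}}\mathbf{J}(\bm{\chi}^o)^\top\mathbf{r}(\bm{\chi}^o) = O_p(1)$ and $\hat{\bm{\chi}}^{\rm ML} - \bm{\chi}^o = O_p(1/\sqrt{n})$, both following from the i.i.d.\ zero-mean Gaussian epipolar errors via a CLT / Lemma~\ref{Lemma_1} argument.

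With these in place, I would Taylor-expand the gradient $\mathbf{g}(\bm{\chi}) \triangleq \mathbf{J}(\bm{\chi})^\top\mathbf{r}(\bm{\chi})$ about $\hat{\bm{\chi}}^{\rm ML}$, evaluate at $\hat{\bm{\chi}}^{\rm BE}$, use $\mathbf{g}(\hat{\bm{\chi}}^{\rm ML}) = \mathbf{0}$ together with $\hat{\bm{\chi}}^{\rm BE} - \hat{\bm{\chi}}^{\rm ML} = O_p(1/\sqrt{n})$, and substitute into the GN update; the key Gauss--Newton-specific point is that the surrogate Hessian $\frac{1}{n}\mathbf{J}^\top\mathbf{J}$ differs from the true normalized Hessian only by the residual-weighted term $\frac{1}{n}\sum_i r_i\nabla^2 r_i$, which converges to zero in probability because $\mathbb{E}[r_i]=0$. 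This is exactly the computation in \cite[Theorem 2]{mu2017globally}, so once (i)--(iii) are checked it suffices to invoke that result, yielding $\hat{\bm{\chi}}^{\rm GN} - \hat{\bm{\chi}}^{\rm ML} = o_p(1/\sqrt{n})$. I expect the main obstacle to be not any single estimate but the bookkeeping that makes the two-parameterization issue rigorous: the BE estimator targets the lifted state $\mathbf{x}$ on the unit cylinder $\{x_1^2 + x_2^2 = 1\}$ through the linearized 3D--2D model, whereas the ML/GN refinement lives on $\bm{\chi}\in\mathbb{R}^4$ through the epipolar model, so one must confirm both estimators are consistent for the same $\bm{\chi}^o$ and that the non-degeneracy guaranteeing $\mathbf{M}(\bm{\chi}^o)\succ 0$ is the generic case; the residual-term-vanishes argument justifying Gauss--Newton in place of full Newton is then the only remaining delicate point and is settled by the zero-mean property of the epipolar errors.
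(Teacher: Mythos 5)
Your proposal is correct in substance, but note that the paper does not actually prove this lemma at all: it is stated as a direct citation of \cite[Theorem 2]{mu2017globally} and used as a black box. What you have written is therefore strictly more than the paper provides, namely a verification that the hypotheses of that theorem hold for the epipolar ML problem \eqref{eq:4dof_objective}: a $\sqrt{n}$-consistent initializer (transferred from $\hat{\mathbf{x}}^{\rm BE}$ on the lifted cylinder to $\hat{\bm{\chi}}^{\rm BE}\in\mathbb{R}^4$ via the smooth $\mathrm{atan2}$/renormalization map), smoothness and a nonsingular limiting Gram matrix, $\sqrt{n}$-boundedness of the score, and the vanishing of the residual-weighted curvature term that distinguishes Gauss--Newton from Newton. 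All of these checks are sound and are exactly what a self-contained proof would require. Two points deserve slightly more care than your sketch gives them: first, the identification of the two parameterizations must also confirm that the BE estimator (built from the linearized 3D--2D model) and the ML estimator (built from the epipolar model) target the same $\bm{\chi}^o$, which holds because both models are exact at the true pose under the stated noise assumptions; second, your claim that the denominators $\|\mathbf{E}\mathbf{l}_{i,\mathcal{K}_L}\|$ are bounded away from zero fails when the inter-frame translation $\lCK\mathbf{t}^o$ is near zero (the stereo baseline only rescues the $\mathcal{K}_R$ term), so the non-degeneracy condition $\mathbf{M}(\bm{\chi}^o)\succ 0$ genuinely excludes the pure-rotation case and should be stated as an assumption rather than dismissed as generic. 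Neither issue invalidates your argument; they are the standing regularity assumptions under which the cited theorem, and hence the lemma, applies.
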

The small $o_p$ implies that $\hat {\bm \chi}^{\rm GN}$ has the same asymptotic property as $\hat {\bm \chi}^{\rm ML}$~\cite{mu2017globally}, i.e., when the point number is large, $\hat {\bm \chi}^{\rm GN}$ achieves the theoretical lower bound--CRLB.

\section{Simultaneous Covariance Estimation and Pose Refinement} \label{jointly_est_refine}
Recall that we use IMU preintegration and visual PnP for relative 2-DOF (pitch and roll) and 4-DOF (yaw and translation) estimation, respectively. Without the gravity prior, the pitch and roll inevitably drift just like the remaining 4-DOF components of the pose. In this section, we further fuse visual information with the gravity prior to refine the 6-DOF pose and prevent the drift of pitch and roll angles. To handle time-varying accelerations and adaptively weight the gravity prior, we follow \cite{khosoussi2025joint} and jointly optimize the pose and the gravity prior covariance online.

The state vector is expanded to include not only the relative 4-DOF pose between the current frame and keyframe but also the absolute orientation at each intermediate IMU measurement time. Specifically, the state is defined as $\boldsymbol{\chi} = [\psi, \lCK\mathbf{t}^\top, \phi_1, \theta_1,\ldots,\phi_K, \theta_K]^\top$. Here, $\psi$ and $\lCK\mathbf{t}$ are the relative yaw and translation from the previous keyframe, while $({\phi_k, \theta_k})_{k=1}^K$ represents the absolute roll and pitch of the IMU with respect to the world frame $\mathcal{W}$ at each of the $K$ intermediate IMU measurement instants between the keyframe and current frame. The corresponding residuals are:

    \textbf{1) Visual residuals:} For each inlier $i \in \mathcal{I}$, this is the 2D reprojection error $\mathbf{r}_{\text{vis},i} = \mathbf{q}_i - h(\lCK\mathbf{R} {}^{\mathcal{K}}\mathbf{p}_i + \lCK\mathbf{t})$. Here, the relative rotation $\lCK\mathbf{R}$ is fully determined by $\{\psi,\phi_1,\theta_1,\phi_K,\theta_K,\lCI\mathbf{R}\}$.
  
    \textbf{2) Gravity prior residuals:} Under the assumption of low motion acceleration, the accelerometer reading $\bar{\mathbf{a}}_k$ should align with the gravity vector ${}^{{\mathcal{W}}}\mathbf{g}$ transformed into the IMU frame. We define the residual as the directional difference: $\mathbf{r}_{\text{imu},k}(\boldsymbol{\chi}) = \bar{\mathbf{a}}_k - {}^{\mathcal{I}}_{\mathcal{W}}\mathbf{R}(\phi_k, \theta_k) {}^{{\mathcal{W}}}\mathbf{g}$. Here, ${}^{\mathcal{I}}_{\mathcal{W}}\mathbf{R}(\phi_k, \theta_k)$ is the rotation from world to IMU constructed from the state variables $\phi_k$ and $\theta_k$, and ${}^{{\mathcal{W}}}\mathbf{g}$ is the unit gravity vector. This residual measures the non-gravitational component of acceleration. A non-zero value indicates either the violation of the low motion acceleration assumption or a misalignment in the estimated orientation $\{\phi_k, \theta_k\}$.
    
The underlying estimation problem is heteroscedastic due to multi-sensor fusion with time-varying noise. We jointly estimate the state $\boldsymbol{\chi}$ and the IMU measurement covariance $\boldsymbol{\Sigma}_{\text{imu}}$ by solving:
\begin{equation} \label{eq:joint_opt}
    \underset{\boldsymbol{\chi}, \boldsymbol{\Sigma}_{\text{imu}}}{\text{min}} \sum_{i \in \mathcal{I}} \|\mathbf{r}_{\text{vis},i}(\boldsymbol{\chi})\|^2_{\boldsymbol{\Sigma}_{\text{vis}}} + \sum_{k=1}^{K} \|\mathbf{r}_{\text{imu},k}(\boldsymbol{\chi})\|^2_{\boldsymbol{\Sigma}_{\text{imu}}},
\end{equation}
{where $\|\mathbf{r}\|_{\boldsymbol{\Sigma}} \triangleq {({\mathbf{r}}^\top \boldsymbol{\Sigma}^{-1}{\mathbf{r}})^{\frac{1}{2}}}$}. The visual covariance matrix $\boldsymbol{\Sigma}_{\text{vis}}$ is computed once at the beginning from the consistent 2D noise variance estimate obtained in Section~\ref{subsec:consistent_init} and remains fixed, reflecting the stable characteristics of feature detection.

We employ a BCD strategy to solve \eqref{eq:joint_opt}, iterating between updating the state $\boldsymbol{\chi}$ and the IMU covariance $\boldsymbol{\Sigma}_{\text{imu}}$.

    \textbf{1) State Update:} With $\boldsymbol{\Sigma}_{\text{imu}}$ held fixed, the state vector $\boldsymbol{\chi}$ is updated by solving the non-linear least-squares problem~\eqref{eq:joint_opt} using a GN iteration.
    
    \textbf{2) Covariance Update:} With $\boldsymbol{\chi}$ held fixed, we first compute the sample covariance of the IMU residuals $\mathbf{S}_{\text{imu}} = \frac{1}{K}\sum_{k=1}^K \mathbf{r}_{\text{imu},k} \mathbf{r}_{\text{imu},k}^\top$. This matrix directly reflects the robot's recent motion dynamics. During aggressive maneuvers, the low-acceleration assumption is violated, leading to large residuals and a large $\mathbf{S}_{\text{imu}}$. We then update $\boldsymbol{\Sigma}_{\text{imu}}$ using the analytical Wishart-MAP solution from \cite[Thm. 1]{khosoussi2025joint}. This update is a Bayesian procedure that computes a posterior by optimally blending the prior covariance with the likelihood information $\mathbf{S}_{\text{imu}}$. This mechanism adaptively adjusts the weight of the IMU gravity constraint: a larger sample covariance $\mathbf{S}_{\text{imu}}$ results in a larger posterior covariance $\boldsymbol{\Sigma}_{\text{imu}}$, which in turn reduces the weight ($\boldsymbol{\Sigma}_{\text{imu}}^{-1}$) of the IMU prior term during optimization.

The above gravity-assisted joint optimization serves two critical functions. First, it refines the initial 4-DOF pose into a full, high-fidelity 6-DOF estimate. Second and more importantly, it provides a mechanism to counteract the drift of the roll and pitch estimates. By continuously anchoring the orientation to the stable, non-drifting global gravity vector, this step effectively acts as a probabilistic complementary filter, correcting the drift inherent in the gyroscope integration and helping the decoupling of pitch and roll in Section~\ref{section::imu_gravity_est}.

\begin{figure}[!t]
    \centering
    \includegraphics[width=1\linewidth]{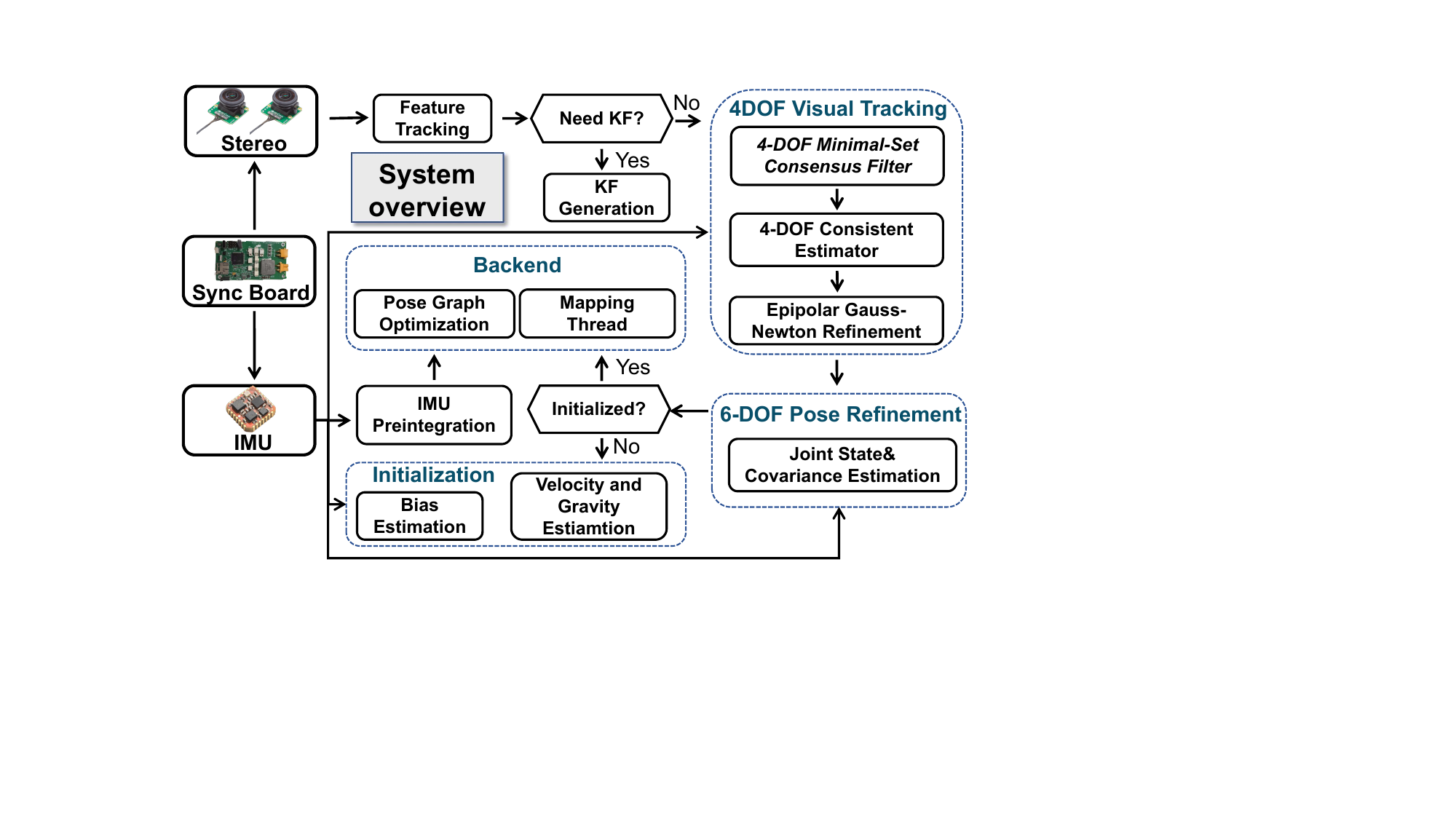}
    \caption{System architecture of the proposed stereo-inertial odometry framework.}
    \label{system_overview}
\end{figure}

\section{Sytem Realization}
In this section, we introduce the whole realization of our proposed GeVI-SLAM system.
The architecture of the GeVI-SLAM system, depicted in Fig.~\ref{system_overview}, includes a frontend and a backend to achieve both high-frequency pose tracking and long-term global consistency.

\subsection{Frontend}\label{section::frontend}

The frontend initializes, following VINS-Mono~\cite{qin2018vins}, by jointly estimating initial velocity, gravity direction, and gyroscope biases. With a stereo camera, the metric scale is directly observable, avoiding the large motions needed by monocular scale recovery and improving initialization robustness. For each new stereo frame, we track features via optical flow, reject outliers with a 4-DOF minimal-set consensus filter constrained by predicted IMU pitch and roll, run the 4-DOF consistent estimator~\eqref{eqn:consistent_est} followed by a single epipolar GN refinement, and finally use a joint state–covariance estimation module with the IMU gravity prior to output the pose. If too few features are tracked, the frame is promoted to a new keyframe, as in \cite{ferrera2021ov}.

\textbf{4-DOF Minimal-Set Consensus Filter.}
To efficiently identify the inlier set for pose tracking with pitch and roll constraints, the frontend embeds the minimal 3-point LS solver in Eq.~\eqref{biased_LS} into a RANSAC framework. The number of iterations needed to reach confidence $p$ with inlier ratio $w$ is $N \ge \log(1-p)/\log\bigl(1-w^{\,s}\bigr)$~\cite{fischler1981random}, where $s$ is the sample size per iteration. Note that to uniquely represent the 4-DOF pose $\boldsymbol{\chi}\in\mathbb{R}^4$, one needs a five-dimensional vector; we use $\mathbf{x} \triangleq [\cos(\psi), \sin(\psi), \lCK\mathbf{t}^\top]^\top \in \mathbb R^5$. Since each feature correspondence provides two independent constraints, $s=3$ (which we use) is the minimal case, offering a substantial efficiency advantage over conventional 5-point RANSAC ($s=5$). Assuming fixed $p$, the theoretical minimum iteration ratio between our 3-point and the 5-point schemes is $N_{3}/N_{5}=\Theta(w^{2})$. 

\subsection{Backend} \label{sec:backend}

The backend employs a dual-threaded optimization strategy to ensure both real-time performance and long-term global consistency. After the 6-DOF pose refinement in the frontend, the pose graph optimization thread optimizes all historical poses in an incremental manner, implemented with GTSAM \cite{gtsam}. 
The graph is connected by IMU preintegration factors and the relative pose factors derived from gravity-assisted visual estimation. To preserve real-time tractability, 3D landmarks are excluded in this optimization.
Concurrently, a mapping thread operates asynchronously to build a globally consistent map. Upon receiving a new keyframe, this thread triangulates new landmarks and initiates a local Bundle Adjustment (BA). Similar to \cite{ferrera2021ov}, this BA jointly optimizes the poses of a local window of keyframes and the 3D positions of their associated landmarks. Landmarks exhibiting large reprojection errors after optimization are deleted from the map. After the local BA converges, the optimized, more globally consistent poses are injected back into the pose graph as high-confidence absolute pose priors. These priors act as anchors, pulling the recent trajectory into alignment with the global map and thereby mitigating accumulated drift.



\section{Experiments} \label{experiments}
\begin{figure*}[t]
  \centering
  \subfloat[Yaw Error vs. Number of Points]{\includegraphics[width=0.24\textwidth]{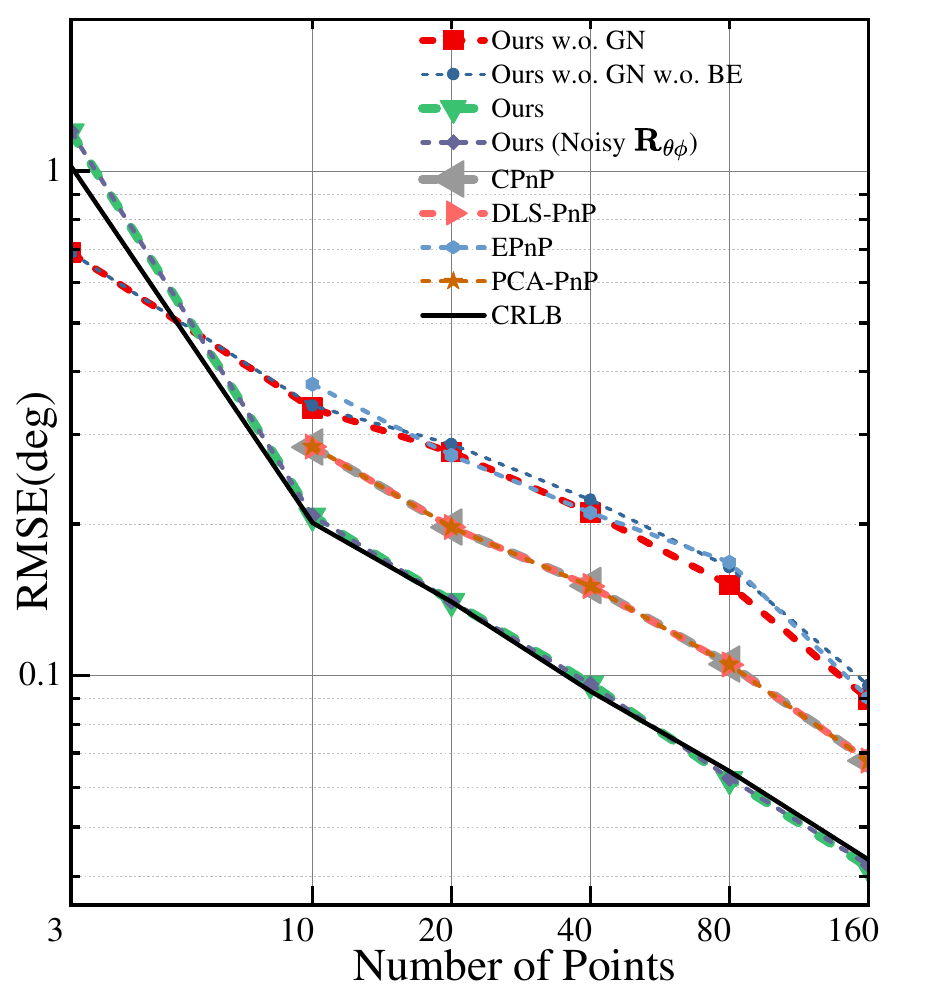}\label{fig:simu_yaw}}
  \hfill
  \subfloat[Translation Error vs. Number of Points]{\includegraphics[width=0.24\textwidth]{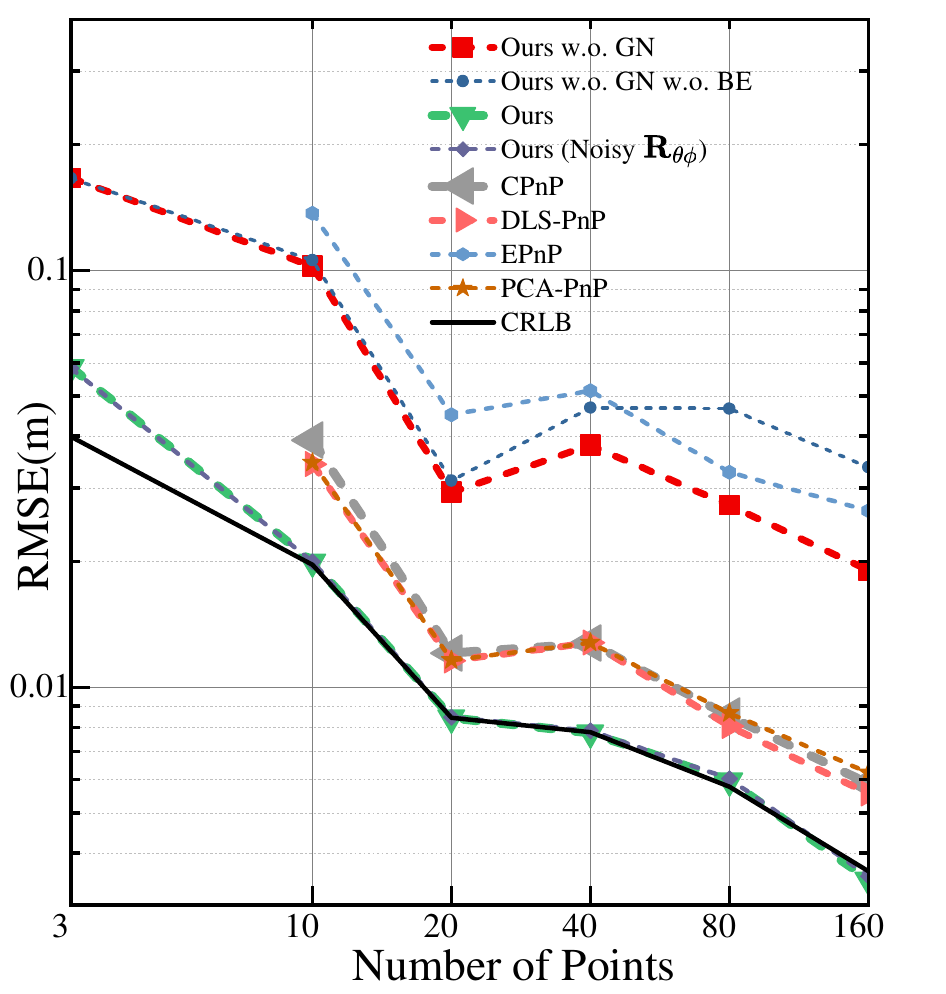}\label{fig:simu_t}}
  \hfill
  \subfloat[RANSAC Performance vs. Outlier Rate]{\includegraphics[width=0.24\textwidth]{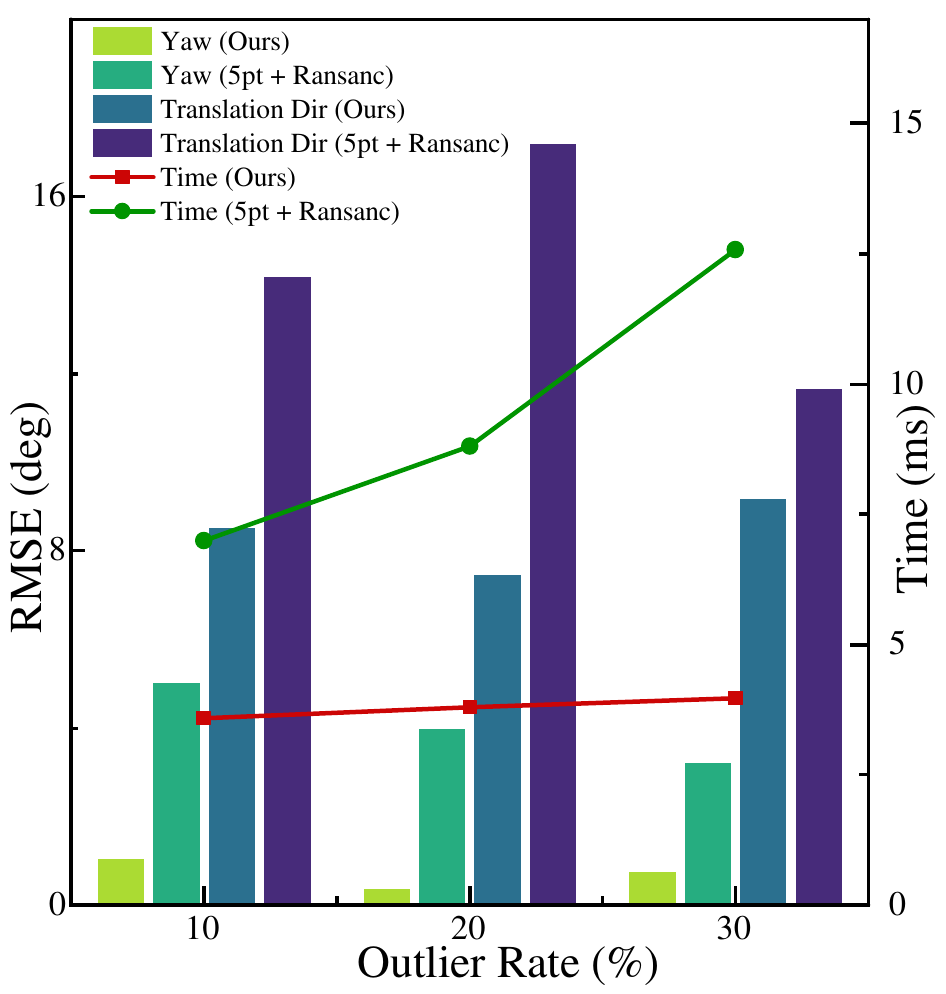}\label{fig:ransanc_simulation1}}
  \hfill
  \subfloat[Adaptive Fusion and Pose Error Drift]{\includegraphics[width=0.24\textwidth]{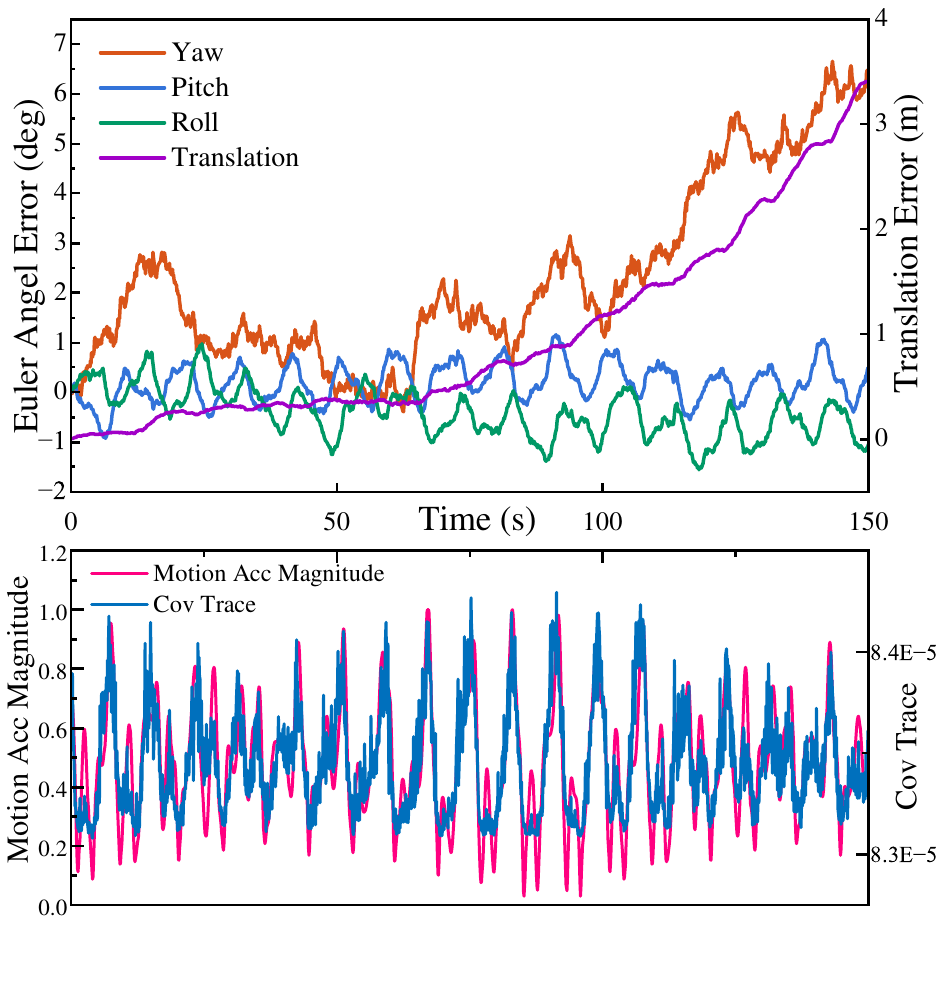}\label{fig:simu_joint}}
  \caption{\textbf{(a)-(b):} Our 4-DOF PnP estimator achieves CRLB accuracy, outperforming other PnP methods. 
  \textbf{(c):} Our 4-DOF minimal-set consensus filter achieves better accuracy at a lower computational cost than the 5-point method. 
  \textbf{(d):} Our adaptive fusion framework maintains drift-free roll and pitch estimates.}
  \label{fig:simulation_results}
\end{figure*}
\subsection{Experiments on Synthetic Data} \label{simulation}
The purposes of the simulations are three-fold: 1) verifying the consistency and asymptotic efficiency of our BE 4-DOF estimator; 2) demonstrating the efficiency and robustness of the proposed 4-DOF minimal-set consensus filter; 3) showing the ability to adaptively fuse the IMU gravity prior and maintain drift-free pitch and roll estimates.
In our simulations, the baseline of the stereo camera is set as ${\bf t}=[0.2,0,0]^\top~\text{m}$ and ${\bf R}=\mathbf{I}$. 
The camera has a focal length of $f=1100$ pixels, an image size of $800 \times 800$ pixels, and captures 3D points at depths ranging from 1 to 10 meters. 
Unless otherwise specified, Gaussian noises with a standard deviation of 2.5 pixels are added to 2D features.

\subsubsection{Estimator Consistency and Accuracy}
\label{subsubsec:sim_consistency}
We evaluate the statistical properties of our 4-DOF PnP estimator via a 700-run Monte Carlo simulation. The results, presented in Fig.~\ref{fig:simu_yaw} and Fig.~\ref{fig:simu_t}, benchmark the RMSE of our estimators against several widely used and state-of-the-art solvers, including CPnP\cite{zeng2023cpnp}, EPnP\cite{lepetit2009ep}, DLS-PnP\cite{hesch2011direct}, and PCA-PnP\cite{zhou2025novel}. The theoretical lower bound, CRLB, is also included. As depicted, our initial estimator (\texttt{Ours w.o. GN}) is consistent, as the RMSEs of both yaw and translation decrease linearly in the double-log plot as the point count increases. This property stems from our bias elimination method, which guarantees the estimate converges to the true pose. 
Leveraging this consistent estimate, a single-step GN iteration (\texttt{Ours}) is sufficient to asymptotically find the global minimum of the ML problem~\eqref{eq:4dof_objective} and achieve the CRLB accuracy. The compared methods are either inconsistent or unable to reach the CRLB. We also note that our 4-DOF estimator remains operational with as few as three points and consistently provides the most accurate estimates across all point numbers.
To further assess our estimator's robustness in a more practical context, we introduce roll and pitch noise with a standard deviation of 0.01$^{\circ}$ to simulate typical IMU pre-integration errors. Even with this perturbation, our method (\texttt{Ours (Noisy ${\bf R}_{\theta \phi}$)}) still coincides with the CRLB, highlighting the framework's resilience.



\subsubsection{Effectiveness of Minimal-Set Outlier Rejection}
\label{subsubsec:sim_ransac}
To evaluate practical robustness and efficiency with outliers, we vary the outlier ratio from 10\% to 30\% and run 400 Monte Carlo trials within a RANSAC framework. Our 4-DOF minimal-set consensus filter, in conjunction with the consistent estimator, uses a noisy roll and pitch prior with a $0.2^\circ$ standard deviation and is benchmarked against MATLAB’s 5-point RANSAC.
As shown in Fig.~\ref{fig:ransanc_simulation1}, our method delivers stable yaw and translation-direction estimates across all outlier rates and significantly outperforms the 5-point baseline.  Moreover, with 30\% outliers, our method reduces 68.5\% of the computation time compared to the 5-point essential-matrix RANSAC.
Together, these results highlight that our minimal-set consensus filter achieves both higher accuracy and lower computational complexity in practical scenarios with outliers.

\subsubsection{Validation of Adaptive Fusion for Drift-Free Pitch/Roll}
\label{subsubsec:sim_dynamics}
We simulate a 150-second trajectory, where the motion acceleration is time-varying but is upper-bounded by 1 m/s$^2$ by considering the water resistance. During the simulation, the gyroscope is corrupted by both white noise and a slowly drifting bias.
The pose error curves in Fig.~\ref{fig:simu_joint} show the expected yaw and translation drift, whereas the roll and pitch errors are drift-free and stay near zero, consistent with our claim on the gravity-enhanced joint optimization in Section~\ref{jointly_est_refine}. The figure also shows that as the motion acceleration increases, the estimated IMU gravity prior covariance grows, reducing the weight for the gravity prior residual. Hence, the estimator relies more on visual measurements.
In low-acceleration periods, the gravity prior regains influence, and the attitude is re-anchored, suppressing pitch and roll drift. These results confirm the effectiveness of our adaptive fusion for accurate and drift-free roll/pitch estimation over long horizons.

\subsection{Real Water Pool Experiments} \label{field_exp}
To validate real-world performance, we conducted underwater trials with a custom 60\,kg robot in a 10\,m $\times$ 5\,m, 2\,m-deep artificial pool with 10 underwater motion-capture cameras for ground truth. The robot carries a hardware-synchronized downward-facing stereo camera (1280$\times$720, 10\,Hz) and an MTI-630 IMU (200\,Hz).  
Two scenarios are covered: (i) scenes with numerous, highly similar feature points where we collect the \textbf{POOL-EASY} dataset and (ii) feature-sparse, largely coplanar scenes where we collect the \textbf{POOL-HARD} dataset. The \textbf{POOL-HARD} dataset features low texture and higher average accelerations.

We benchmark against state-of-the-art open-source methods: ORB-SLAM3~\cite{campos2021orb}, VINS-Fusion~\cite{qin2018vins}, and SVIN2~\cite{rahman2019svin2}, using ATE and RPE as metrics. For fairness, ORB-SLAM3 and VINS-Fusion are run with their EuRoC Micro Aerial Vehicle (MAV) dataset (EuRoC) configuration files, while SVIN2 is run in VI-only mode with sonar and depth sensors disabled. Qualitative results are shown in Fig.~\ref{trajectory}, which shows that our method achieves better consistency with ground truth.
 
The quantitative results in Table~\ref{tab:pool_metrics_singlecol} show superior accuracy and robustness of our method. 
On the \textbf{POOL-EASY} dataset, our method consistently achieves the best performance in both ATE and RPE, except for the ATE of Easy6 sequence. 
We note that SVIN2 fails to complete the full trajectory estimation in all sequences. This is because in these repetitive scenes, the BRISK feature descriptor used by SVIN2 yields a large portion of outliers that will be filtered out, leaving too few inliers to reliably initialize and track, as illustrated in the top right in Fig.~\ref{fig:orb_vs_brisk}.

The performance gap widens significantly on the more demanding \textbf{POOL-HARD} dataset. In this dataset, SVIN2 only fails in one sequence, as a calibration board is placed in the environment to aid initialization. However, ORB-SLAM3 fails in all sequences. This is because the ORB descriptor detects fewer stable corners in these texture-less scenes and is prone to early tracking loss, as shown in the bottom left in Fig.~\ref{fig:orb_vs_brisk}.
Compared to our methods, VINS-Fusion requires additional scale initialization, which needs more motion excitation and occasionally leads to failure.
Notably, our system is the only method that successfully completes all sequences on the \textbf{POOL-HARD} dataset.
Meanwhile, it exhibits significantly better RPE and consistently delivers the best or second-best ATE. The RPE superiority of our system is mainly attributed to the consistent 4-DOF PnP estimator and adaptive 6-DOF relative pose refinement.

\begin{figure}[h]
    \centering
    \includegraphics[width=0.8\linewidth]{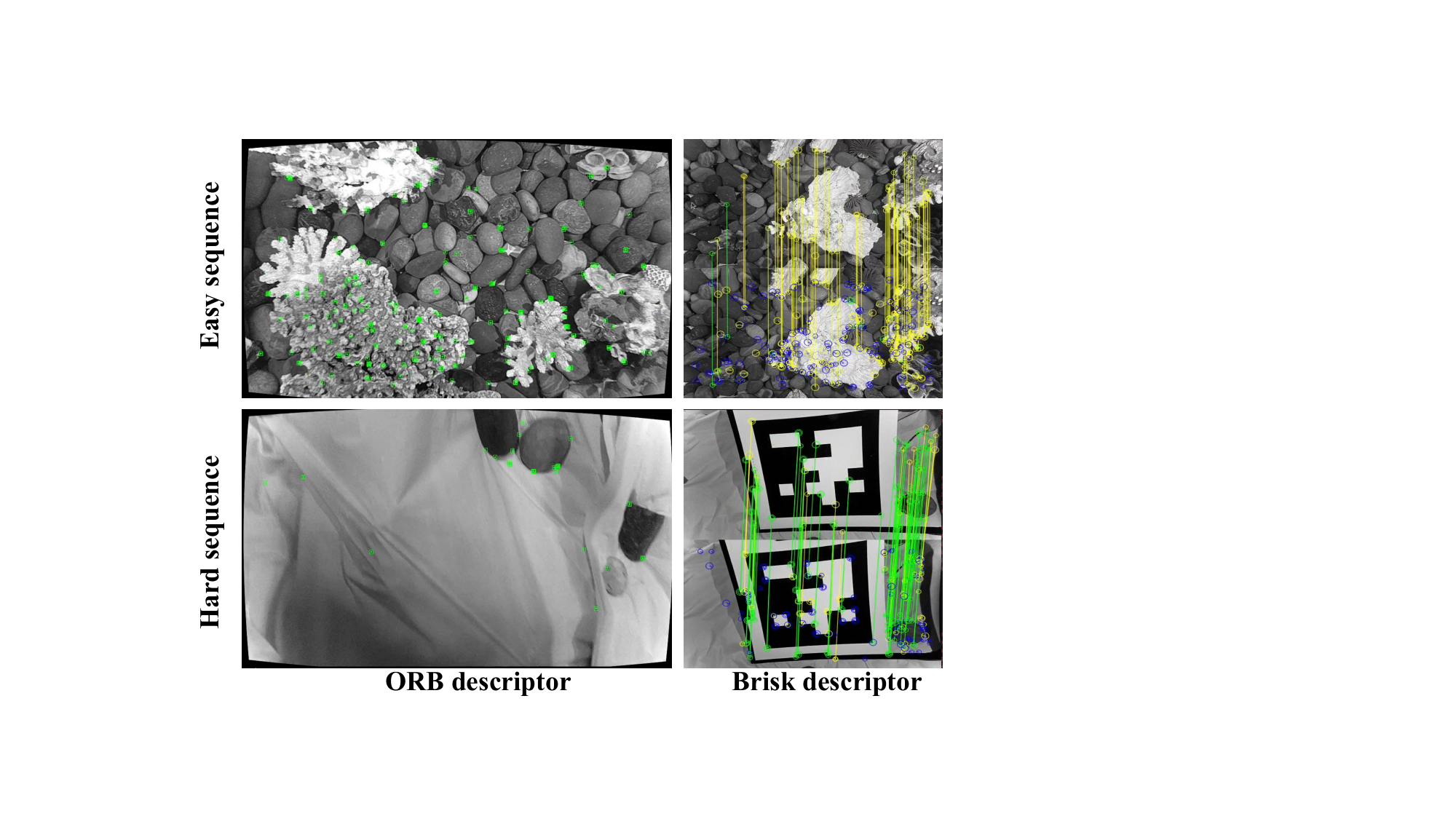}
    \caption{Illustration of feature matching: ORB-SLAM3 (ORB) vs. SVIN2 (BRISK). The green denotes inliers, while the yellow indicates outliers. }
    \label{fig:orb_vs_brisk}
\end{figure}

\begin{table}[t]
\centering
\footnotesize
\caption{Comparison of ATE and RPE on the \textbf{POOL-EASY} and \textbf{POOL-HARD} datasets. Values in {\color{blue}\textbf{blue bold}} are the smallest and {\color{blue!80}blue} are the second smallest.}
\label{tab:pool_metrics_singlecol}
\setlength{\tabcolsep}{2.5pt}
\begin{tabular}{l|cccc||cccc}
\toprule
\multirow{2}{*}{Sequence} & \multicolumn{4}{c||}{ATE (m)} & \multicolumn{4}{c}{RPE (m)} \\
\cmidrule(lr){2-5}\cmidrule(lr){6-9}
 & Ours & ORB* & VINS & SVIN2 & Ours & ORB* & VINS & SVIN2 \\
\midrule
\multicolumn{9}{c}{\textbf{POOL-EASY}} \\
\midrule
Easy1 & {\color{blue} \textbf{0.0116}} & 0.0128 & {\color{blue!80} 0.0127} & failed & {\color{blue} \textbf{0.0100}} & {\color{blue!80} 0.0101} & 0.0264 & failed \\
Easy2 & {\color{blue} \textbf{0.0173}} & 0.0196 & {\color{blue!80} 0.0190} & failed & {\color{blue} \textbf{0.0188}} & {\color{blue!80} 0.0188} & 0.0487 & failed \\
Easy3 & {\color{blue} \textbf{0.0319}} & {\color{blue!80} 0.0339} & failed & failed & {\color{blue} \textbf{0.0106}} & {\color{blue!80} 0.0106} & failed & failed \\
Easy4 & {\color{blue} \textbf{0.0211}} & {\color{blue!80} 0.0218} & 0.0657 & failed & {\color{blue} \textbf{0.0111}} & {\color{blue!80} 0.0111} & 0.0492 & failed \\
Easy5 & {\color{blue} \textbf{0.0147}} & {\color{blue!80} 0.0157} & 0.0158 & failed & {\color{blue} \textbf{0.0134}} & {\color{blue!80} 0.0135} & 0.0354 & failed \\
Easy6 & {\color{blue!80} 0.0219} & {\color{blue} \textbf{0.0217}} & 0.0408 & failed & {\color{blue} \textbf{0.0104}} & {\color{blue!80} 0.0105} & 0.0262 & failed \\
Easy7 & {\color{blue} \textbf{0.0270}} & 0.0485 & {\color{blue!80} 0.0344} & failed & {\color{blue} \textbf{0.0095}} & {\color{blue!80} 0.0095} & 0.0238 & failed \\
\midrule
\multicolumn{9}{c}{\textbf{POOL-HARD}} \\
\midrule
Hard1 & {\color{blue} \textbf{0.0246}} & failed & failed & {\color{blue!80} 0.0282} & {\color{blue} \textbf{0.0151}} & failed & failed & {\color{blue!80} 0.0631} \\
Hard2 & {\color{blue!80} 0.0463} & failed & 0.0513 & {\color{blue} \textbf{0.0312}} & {\color{blue} \textbf{0.0256}} & failed & {\color{blue!80} 0.0456} & 0.0710 \\
Hard3 & {\color{blue!80} 0.0406} & failed & 0.0353 & {\color{blue} \textbf{0.0343}} & {\color{blue} \textbf{0.0190}} & failed & {\color{blue!80} 0.0350} & 0.0640 \\
Hard4 & {\color{blue!80} 0.0486} & failed & 0.0538 & {\color{blue} \textbf{0.0301}} & {\color{blue} \textbf{0.0207}} & failed & {\color{blue!80} 0.0344} & 0.0731 \\
Hard5 & {\color{blue} \textbf{0.0333}} & failed & failed & failed & {\color{blue} \textbf{0.0158}} & failed & failed & failed \\
Hard6 & {\color{blue} \textbf{0.0280}} & failed & {\color{blue!80} 0.0338} & 0.0365 & {\color{blue} \textbf{0.0149}} & failed & {\color{blue!80} 0.0260} & 0.0598 \\
Hard7 & {\color{blue!80} 0.1054} & failed & failed & {\color{blue} \textbf{0.0536}} & {\color{blue} \textbf{0.0078}} & failed & failed & {\color{blue!80} 0.0485} \\
\bottomrule
\end{tabular}

\vspace{0.5em}
\parbox{\linewidth}{\footnotesize\raggedright * denotes the stereo-only mode. The stereo-IMU mode of ORB-SLAM3 is not used as it is difficult to successfully initialize underwater due to the robot's inability to provide sufficient acceleration for IMU excitation.}
\end{table}

\begin{figure*}[t]
    \centering
    \includegraphics[width=0.9\linewidth]{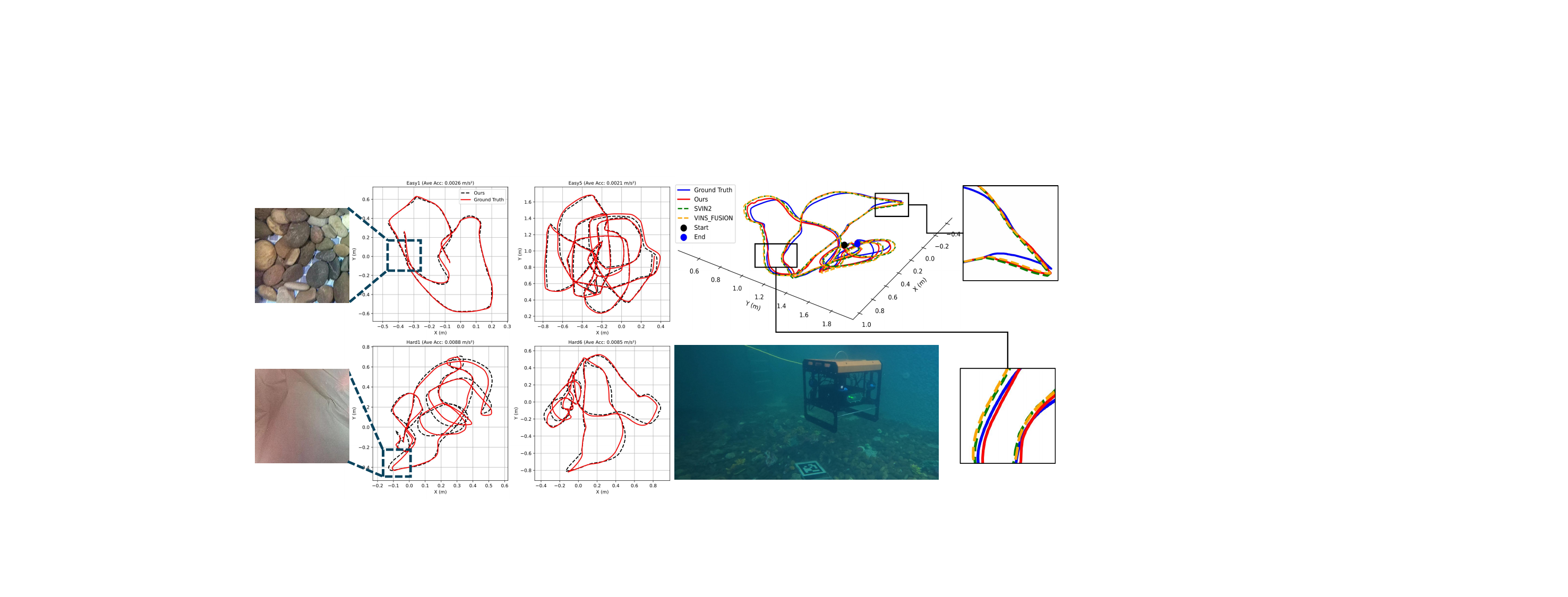}
    \caption{Trajectory evaluation on POOL-EASY/POOL-HARD. Left: two EASY and two HARD sequences with motion-capture ground truth and our estimates. Top right: a 3D overlay on an EASY sequence comparing GeVI-SLAM with SVIN2, ORB-SLAM3, and VINS-Fusion. Bottom right: the underwater robot during data collection.}
    \label{trajectory}
\end{figure*}

\section{Conclusion and Future Work}\label{section::conclusion}
We presented GeVI-SLAM, a gravity-enhanced stereo visual–inertial SLAM that reduces 6-DOF tracking to a 4-DOF PnP problem. A BE initializer with one-step Gauss–Newton achieves CRLB-level accuracy, and a 4-DOF minimal-set consensus filter efficiently rejects outliers. An adaptive 6-DOF refinement jointly estimates pose and IMU prior covariance, keeping roll/pitch drift-free. Simulations and real pool experiments show superior accuracy and robustness over ORB-SLAM3, VINS-Fusion, and SVIN2.

Limitations include yaw/translation drift without loop closure, the lack of refractive modeling, and uncharacterized roll/pitch uncertainty within the 4-DOF formulation. Future work will add robust loop closure and place recognition for underwater scenes, refractive camera modeling, fusion with DVL/pressure/range measurements, and field validation.









\bibliographystyle{IEEEtran}
\bibliography{bibfile}

\begin{thebibliography}{10}
\providecommand{\url}[1]{#1}
\csname url@samestyle\endcsname
\providecommand{\newblock}{\relax}
\providecommand{\bibinfo}[2]{#2}
\providecommand{\BIBentrySTDinterwordspacing}{\spaceskip=0pt\relax}
\providecommand{\BIBentryALTinterwordstretchfactor}{4}
\providecommand{\BIBentryALTinterwordspacing}{\spaceskip=\fontdimen2\font plus
\BIBentryALTinterwordstretchfactor\fontdimen3\font minus \fontdimen4\font\relax}
\providecommand{\BIBforeignlanguage}[2]{{%
\expandafter\ifx\csname l@#1\endcsname\relax
\typeout{** WARNING: IEEEtran.bst: No hyphenation pattern has been}%
\typeout{** loaded for the language `#1'. Using the pattern for}%
\typeout{** the default language instead.}%
\else
\language=\csname l@#1\endcsname
\fi
#2}}
\providecommand{\BIBdecl}{\relax}
\BIBdecl

\bibitem{mallios2017underwater}
A.~Mallios, E.~Vidal, R.~Campos, and M.~Carreras, ``Underwater caves sonar data set,'' \emph{The International Journal of Robotics Research}, vol.~36, no.~12, pp. 1247--1251, 2017.

\bibitem{joshi2019experimental}
B.~Joshi, S.~Rahman, M.~Kalaitzakis, B.~Cain, J.~Johnson, M.~Xanthidis, N.~Karapetyan, A.~Hernandez, A.~Q. Li, N.~Vitzilaios \emph{et~al.}, ``Experimental comparison of open source visual-inertial-based state estimation algorithms in the underwater domain,'' in \emph{2019 IEEE/RSJ International Conference on Intelligent Robots and Systems (IROS)}.\hskip 1em plus 0.5em minus 0.4em\relax IEEE, 2019, pp. 7227--7233.

\bibitem{xu2025aqua}
S.~Xu, K.~Zhang, and S.~Wang, ``Aqua-slam: Tightly-coupled underwater acoustic-visual-inertial slam with sensor calibration,'' \emph{IEEE Transactions on Robotics}, 2025.

\bibitem{rahman2019svin2}
S.~Rahman, A.~Q. Li, and I.~Rekleitis, ``{Svin2: An underwater slam system using sonar, visual, inertial, and depth sensor},'' in \emph{2019 IEEE/RSJ International Conference on Intelligent Robots and Systems (IROS)}.\hskip 1em plus 0.5em minus 0.4em\relax IEEE, 2019, pp. 1861--1868.

\bibitem{yao2024sg}
H.~Yao, Y.~Ma, P.~Li, C.~Zhai, J.~Song, M.~Ouyang, Z.~Dai, and X.~Zhu, ``Sg-vio: Monocular visual-inertial odometry with tightly coupled structural lines and gravity to avoid degeneracy,'' \emph{IEEE Internet of Things Journal}, 2024.

\bibitem{qin2018vins}
T.~Qin, P.~Li, and S.~Shen, ``Vins-mono: A robust and versatile monocular visual-inertial state estimator,'' \emph{IEEE transactions on robotics}, vol.~34, no.~4, pp. 1004--1020, 2018.

\bibitem{hu2021visual}
C.~Hu, S.~Zhu, Y.~Liang, Z.~Mu, and W.~Song, ``Visual-pressure fusion for underwater robot localization with online initialization,'' \emph{IEEE Robotics and Automation Letters}, vol.~6, no.~4, pp. 8426--8433, 2021.

\bibitem{campos2021orb}
C.~Campos, R.~Elvira, J.~J.~G. Rodr{\'\i}guez, J.~M. Montiel, and J.~D. Tard{\'o}s, ``Orb-slam3: An accurate open-source library for visual, visual--inertial, and multimap slam,'' \emph{IEEE Transactions on Robotics}, vol.~37, no.~6, pp. 1874--1890, 2021.

\bibitem{hildebrandt2010imu}
M.~Hildebrandt and F.~Kirchner, ``Imu-aided stereo visual odometry for ground-tracking auv applications,'' in \emph{OCEANS'10 IEEE SYDNEY}.\hskip 1em plus 0.5em minus 0.4em\relax IEEE, 2010, pp. 1--8.

\bibitem{bellavia2017selective}
F.~Bellavia, M.~Fanfani, and C.~Colombo, ``Selective visual odometry for accurate auv localization,'' \emph{Autonomous Robots}, vol.~41, no.~1, pp. 133--143, 2017.

\bibitem{yang2017feature}
N.~Yang, R.~Wang, and D.~Cremers, ``Feature-based or direct: An evaluation of monocular visual odometry,'' \emph{arXiv preprint arXiv:1705.04300}, pp. 1--12, 2017.

\bibitem{quattrini2016experimental}
A.~Quattrini~Li, A.~Coskun, S.~M. Doherty, S.~Ghasemlou, A.~S. Jagtap, M.~Modasshir, S.~Rahman, A.~Singh, M.~Xanthidis, J.~M. O’Kane \emph{et~al.}, ``Experimental comparison of open source vision-based state estimation algorithms,'' in \emph{International Symposium on Experimental Robotics}.\hskip 1em plus 0.5em minus 0.4em\relax Springer, 2016, pp. 775--786.

\bibitem{wang2023robust}
Y.~Wang, D.~Gu, X.~Ma, J.~Wang, and H.~Wang, ``Robust real-time auv self-localization based on stereo vision-inertia,'' \emph{IEEE Transactions on Vehicular Technology}, vol.~72, no.~6, pp. 7160--7170, 2023.

\bibitem{miao2021univio}
R.~Miao, J.~Qian, Y.~Song, R.~Ying, and P.~Liu, ``Univio: Unified direct and feature-based underwater stereo visual-inertial odometry,'' \emph{IEEE Transactions on Instrumentation and Measurement}, vol.~71, pp. 1--14, 2021.

\bibitem{huang2023tightly}
Y.~Huang, P.~Li, S.~Yan, Y.~Ou, Z.~Wu, M.~Tan, and J.~Yu, ``Tightly-coupled visual-dvl fusion for accurate localization of underwater robots,'' in \emph{2023 IEEE/RSJ International Conference on Intelligent Robots and Systems (IROS)}.\hskip 1em plus 0.5em minus 0.4em\relax IEEE, 2023, pp. 8090--8095.

\bibitem{lupton2011visual}
T.~Lupton and S.~Sukkarieh, ``Visual-inertial-aided navigation for high-dynamic motion in built environments without initial conditions,'' \emph{IEEE Transactions on Robotics}, vol.~28, no.~1, pp. 61--76, 2011.

\bibitem{li2019rapid}
J.~Li, H.~Bao, and G.~Zhang, ``Rapid and robust monocular visual-inertial initialization with gravity estimation via vertical edges,'' in \emph{2019 IEEE/RSJ International Conference on Intelligent Robots and Systems (IROS)}.\hskip 1em plus 0.5em minus 0.4em\relax IEEE, 2019, pp. 6230--6236.

\bibitem{usenko2016direct}
V.~Usenko, J.~Engel, J.~St{\"u}ckler, and D.~Cremers, ``Direct visual-inertial odometry with stereo cameras,'' in \emph{2016 IEEE international conference on robotics and automation (ICRA)}.\hskip 1em plus 0.5em minus 0.4em\relax IEEE, 2016, pp. 1885--1892.

\bibitem{noh2025garlio}
C.~Noh, W.~Yang, M.~Jung, S.~Jung, and A.~Kim, ``Garlio: Gravity enhanced radar-lidar-inertial odometry,'' \emph{arXiv preprint arXiv:2502.07703}, 2025.

\bibitem{kubelka2022gravity}
V.~Kubelka, M.~Vaidis, and F.~Pomerleau, ``Gravity-constrained point cloud registration,'' in \emph{2022 IEEE/RSJ international conference on intelligent robots and systems (IROS)}.\hskip 1em plus 0.5em minus 0.4em\relax IEEE, 2022, pp. 4873--4879.

\bibitem{mahony2008nonlinear}
R.~Mahony, T.~Hamel, and J.-M. Pflimlin, ``Nonlinear complementary filters on the special orthogonal group,'' \emph{IEEE Transactions on automatic control}, vol.~53, no.~5, pp. 1203--1218, 2008.

\bibitem{EIVPNPRAL2025}
G.~Zeng, Y.~Shen, Z.~Hong, Y.~Hong, V.~Ila, G.~Shi, and J.~Wu, ``Bias-eliminated pnp for stereo visual odometry: Provably consistent and large-scale localization,'' \emph{arXiv preprint arXiv:2504.17410}, 2025.

\bibitem{zeng2024consistent}
G.~Zeng, Q.~Zeng, X.~Li, B.~Mu, J.~Chen, L.~Shi, and J.~Wu, ``Consistent and asymptotically statistically-efficient solution to camera motion estimation,'' \emph{arXiv:2403.01174}, 2024.

\bibitem{mu2017globally}
B.~Mu, E.-W. Bai, W.~X. Zheng, and Q.~Zhu, ``A globally consistent nonlinear least squares estimator for identification of nonlinear rational systems,'' \emph{Automatica}, vol.~77, pp. 322--335, 2017.

\bibitem{khosoussi2025joint}
K.~Khosoussi and I.~Shames, ``Joint state and noise covariance estimation,'' \emph{arXiv preprint arXiv:2502.04584}, 2025.

\bibitem{ferrera2021ov}
M.~Ferrera, A.~Eudes, J.~Moras, M.~Sanfourche, and G.~Le~Besnerais, ``Ov$^{2}$slam: A fully online and versatile visual slam for real-time applications,'' \emph{IEEE Robotics and Automation Letters}, vol.~6, no.~2, pp. 1399--1406, 2021.

\bibitem{fischler1981random}
M.~A. Fischler and R.~C. Bolles, ``Random sample consensus: a paradigm for model fitting with applications to image analysis and automated cartography,'' \emph{Communications of the ACM}, vol.~24, no.~6, pp. 381--395, 1981.

\bibitem{gtsam}
\BIBentryALTinterwordspacing
F.~Dellaert and G.~Contributors, ``borglab/gtsam,'' May 2022. [Online]. Available: \url{https://github.com/borglab/gtsam)}
\BIBentrySTDinterwordspacing

\bibitem{zeng2023cpnp}
G.~Zeng, S.~Chen, B.~Mu, G.~Shi, and J.~Wu, ``Cpnp: Consistent pose estimator for perspective-n-point problem with bias elimination,'' in \emph{2023 IEEE International Conference on Robotics and Automation (ICRA)}.\hskip 1em plus 0.5em minus 0.4em\relax IEEE, 2023, pp. 1940--1946.

\bibitem{lepetit2009ep}
V.~Lepetit, F.~Moreno-Noguer, and P.~Fua, ``Ep n p: An accurate o (n) solution to the p n p problem,'' \emph{International journal of computer vision}, vol.~81, no.~2, pp. 155--166, 2009.

\bibitem{hesch2011direct}
J.~A. Hesch and S.~I. Roumeliotis, ``A direct least-squares (dls) method for pnp,'' in \emph{2011 International Conference on Computer Vision}.\hskip 1em plus 0.5em minus 0.4em\relax IEEE, 2011, pp. 383--390.

\bibitem{zhou2025novel}
L.~Zhou, Z.~Wei, and X.~Wang, ``A novel iterative solution to the perspective-$ n $-point problem via cost function approximation,'' \emph{IEEE Transactions on Robotics}, 2025.

\end{thebibliography}

\end{document}